\DeclarePairedDelimiter{\group}{(}{)}
\DeclarePairedDelimiter{\set}{\{}{\}}
\newcommand{\naturals}{\mathbb{N}}
\newcommand{\reals}{\mathbb{R}}
\newcommand{\posreals}{\reals_{>0}}
\newcommand{\states}{\mathcal{X}}
\newcommand{\gbls}{\mathcal{L}}
\newcommand{\opt}[1][]{u_{#1}}
\newcommand{\altopt}[1][]{v_{#1}}
\newcommand{\opts}{\mathcal{V}}
\newcommand{\optset}[1][]{A_{#1}}
\newcommand{\altoptset}[1][]{B_{#1}}
\newcommand{\optsets}{\mathcal{Q}}
\newcommand{\assessment}{\mathcal{A}}
\newcommand{\desirset}[1][]{D_{#1}}
\newcommand{\desirsets}{\mathbf{D}}
\newcommand{\cohdesirsets}{\overline{\desirsets}}
\newcommand{\rejectset}[1][]{K_{#1}}
\newcommand{\rejectsets}{\mathbf{K}}
\newcommand{\cohrejectsets}{\overline{\rejectsets}}
\newcommand{\choicefun}[1][]{C_{#1}}
\newcommand{\ddualopts}[1][]{\opts^\circ}
\newcommand{\ldualopts}[1][]{\underline{\opts}^\ast}
\newcommand{\nml}[1][{\opt[o]}]{\normalise_{#1}}
\newcommand{\cset}[3][]{\set[#1]{#2\colon#3}}
\newcommand{\lowprev}[1][]{\underline{P}_{#1}}
\newcommand{\uppprev}[1][]{\overline{P}_{#1}}
\newcommand{\linprev}[1][]{P_{#1}}
\newcommand{\cohlowprevs}{\underline{\mathbf{P}}}
\newcommand{\linprevs}{\mathbf{P}}
\DeclareMathOperator{\natex}{Ex}
\DeclareMathOperator{\posi}{posi}
\DeclareMathOperator{\normalise}{N}
\newenvironment{proof*}[1]
  {%
   \begin{proof}}
  {\end{proof}}
\begin{document}
\title{Archimedean Choice Functions}
\subtitle{An Axiomatic Foundation for Imprecise Decision Making}
%
%
\author{Jasper De Bock}
%
%
\institute{
FLip, ELIS, Ghent University, Belgium\\
\email{jasper.debock@ugent.be}
}
\maketitle              
\begin{abstract}
If uncertainty is modelled by a probability measure, decisions are typically made by choosing the option with the highest expected utility. If an imprecise probability model is used instead, this decision rule can be generalised in several ways. We here focus on two such generalisations that apply to sets of probability measures: E-admissibility and maximality. Both of them can be regarded as special instances of so-called choice functions, a very general mathematical framework for decision making. For each of these two decision rules, we provide a set of necessary and sufficient conditions on choice functions that uniquely characterises this rule, thereby providing an axiomatic foundation for imprecise decision making with sets of probabilities. A representation theorem for Archimedean choice functions in terms of coherent lower previsions lies at the basis of both results.

\keywords{E-admissibility \and Maximality \and Archimedean choice functions \and Decision Making \and Imprecise probabilities.}
\end{abstract}

\section{Introduction}

Decision making under uncertainty is typically carried out by combining an uncertainty model with a decision rule. If uncertainty is modelled by a probability measure, the by far most popular such decision rule is maximising expected utility, where one chooses the option---or makes the decision---whose expected utility with respect to this probability measure is the highest.

Uncertainty can also be modelled in various other ways though. The theory of imprecise probabilities, for example, offers a wide range of extensions of probability theory that provide more flexible modelling possibilities, such as differentiating between stochastic uncertainty and model uncertainty. The most straightforward such extension is to consider a set of probability measures instead of a single one, but one can also use interval probabilities, coherent lower previsions, sets of desirable gambles, belief functions, to name only a few.

For all these different types of uncertainty models, various decision rules have been developed, making the total number of possible combinations rather daunting. Choosing which combination of uncertainty model and decision rule to use is therefore difficult and often dealt with in a pragmatic fashion, by using a combination that one is familiar with, that is convenient or that is computationaly advantageous.

To motivate the use of a specific combination in a more principled way, one can also consider its properties. That is, one can aim to select an uncertainty model and decision rule whose resulting decisions satisfy the properties that one finds desirable for the decision problem at hand. In some cases, imposing a given set of properties as axioms can even rule out all combinations but one, thereby providing an axiomatic foundation for the use of a specific type of uncertainty model and decision rule. The famous work of Savage~\cite{savage1972}, for example, provides an axiomatic foundation for maximising expected utility with respect to a probability measure. 

The main contributions of this paper are axiomatic foundations for three specific decision rules that make use of imprecise probability models~\cite{troffaes2007}. The first two decision rules, called E-admissibility~\cite{levi1974} and maximality~\cite{walley1991}, apply to a set of probability measures; they both reduce to maximising expected utility when this set contains only a single probability measure, but are otherwise fundamentally different. The third decision rule applies to sets of coherent lower previsions; it is more abstract then the other two, but includes both of them as special cases. This allows us to use our axiomatic foundation for the third rule as an intermediate step towards axiomatising E-admissibility and maximality.

To obtain our results, we make extensive use of choice functions~\cite{seidenfeld2010,2017vancamp:phdthesis,pmlr-v103-de-bock19b}: a unifying framework for studying decision making. These choice functions require no reference to an uncertainty model or a decision rule, but are simply concerned with the decisions themselves, making them an excellent tool for comparing different methods. We will be especially interested in Archimedean choice functions, because all of the decision schemes that we consider are of this particular type. 

\iftoggle{arxiv}{
  
}{
In order to adhere to the page limit constraint, all proofs 
are omitted; they are available in the appendix of an extended online version \cite{ipmu2020debock:arxiv}.
}

\section{Choice Functions and Uncertainty Models}

A choice function $\choicefun$, quite simply, is a function that chooses. Specifically, for every finite set $\optset$ of options, it returns a subset $\choicefun(\optset)$ of $\optset$. We here consider the special case where options are gambles: bounded real functions on some fixed state space $\states$. We let $\gbls$ be the set of all gambles on $\states$ and we use $\optsets$ to denote the set of all finite subsets of $\gbls$, including the empty set. A choice function $\choicefun$ is then a map from $\optsets$ to $\optsets$ such that, for all $\optset\in\optsets$, $\choicefun(\optset)\subseteq\optset$.

If $\choicefun(\optset)$ contains only a single option $\opt$, this means that $\opt$ is chosen from $\optset$. If $\choicefun(\optset)$ consists of multiple options, several interpretations can be adopted. On the one hand, this can be taken to mean that each of the options in $\choicefun(\optset)$ is chosen. On the other hand, $\choicefun(\optset)$ can also be regarded as a set of options among which it is not feasible to choose, in the sense that they are incomparable based on the available information; in other words: the elements of $\optset\setminus\choicefun(\optset)$ are rejected, but those in $\choicefun(\optset)$ are not necessarily `chosen'. While our mathematical results further on do not require a philosophical stance in this matter, it will become apparent from our examples and interpretation that we have the later approach in mind.

A very popular class of choice functions---while not necessarily always called as such---are those that correspond to maximising expected utility. The idea there is to consider a probability measure $\linprev$ and to let $\choicefun(\optset)$ be the element(s) of $\optset$ whose expected value---or utility---is highest. The probability measure in question is often taken to be countably additive, but we will not impose this restriction here, and impose only finite additivity. These finitely additive probability measures are uniquely characterised by their corresponding expectation operators, which are linear real functionals on $\gbls$ that dominate the infinum operator. We follow de Finetti in denoting these expectation operators by $\linprev$ as well, and in calling them linear previsions~\cite{finetti1970,walley1991}.

\begin{definition}
\label{def:linearprevision}
A \emph{linear prevision}~$\linprev$ on~$\gbls$ is a map from~$\gbls$ to $\reals$ that satisfies
\begin{enumerate}[label=$\mathrm{P}_{\arabic*}$.,ref=$\mathrm{P}_{\arabic*}$,leftmargin=*,start=1]
\item\label{ax:linprev:inf} $\linprev(\opt\,)\geq\inf\opt$ for all $\opt\in\gbls$;\hfill \emph{boundedness}
\item\label{ax:linprev:homo} $\linprev(\lambda\opt)=\lambda\linprev(\opt)$ for all real $\lambda$ and~$\opt\in\gbls$;\hfill \emph{homogeneity}
\item\label{ax:linprev:additive} $\linprev(\opt+\altopt)=\linprev(\opt)+\linprev(\altopt)$ for all $\opt,\altopt\in\gbls$.\hfill \emph{additivity}
\end{enumerate}
We denote the set of all linear previsions on~$\gbls$ by~$\linprevs$. 
\end{definition}

For any such linear prevision---or equivalently, any finitely additive probability measure---the choice function obtained by maximising expected utility is defined by
\begin{equation}\label{eq:choicefromExpectationmax}
\choicefun[P](\optset)
\coloneqq
\big\{\opt\in\optset\colon(\forall\altopt\in\optset\setminus\{\opt\})~\linprev(\opt)\geq\linprev(\altopt)\big\}
\text{~~for all }\optset\in\optsets.
\end{equation}
It returns the options $\opt$ in $\optset$ that have the highest prevision---or expectation---$\linprev(\opt)$.

However, there are also many situations in which it is not feasible to represent uncertainty by a single prevision or probability measure~\cite[Section~1.4.4]{walley1991}. In those cases, imprecise probability models can be used instead. The most straightforward such imprecise approach is to consider a non-empty set $\mathcal{P}\subseteq\linprevs$ of linear previsions---or probability measures---as uncertainty model, the elements of which can be regarded as candidates for some `true' but unknown precise model.

In that context, maximising expected utility can be generalised in several ways~\cite{troffaes2007}, of which we here consider two. The first is called E-admissibility; it chooses those options that maximise expected utility with respect to at least one precise model in $\mathcal{P}$:
\begin{align}
\choicefun[\mathcal{P}]^{\mathrm{\,E}}(\optset)
\coloneqq&
\big\{\opt\in\optset\colon(\exists\linprev\in\mathcal{P})\,(\forall\altopt\in\optset\setminus\{\opt\})~\linprev(\opt)\geq\linprev(\altopt)\big\}
\text{~~for all }\optset\in\optsets.
\label{eq:choicefromEadmissibility}
\end{align}
The second generalisation is called maximality and starts from a partial order on the elements of $\optset$. In particular, for any two options $\opt,\altopt\in\optset$, the option $\opt$ is deemed better than $\altopt$ if its expectation is higher for every $\linprev\in\mathcal{P}$. Decision making with maximality then consists in choosing the options $\opt$ in $\optset$ that are undominated in this order, in the sense that no other option $\altopt\in\optset$ is better than $\opt$:
\begin{equation}\label{eq:choicefromMaximality}
\choicefun[\mathcal{P}]^{\mathrm{\,M}}(\optset)
\coloneqq
\big\{\opt\in\optset\colon(\forall\altopt\in\optset\setminus\{\opt\})\,(\exists\linprev\in\mathcal{P})~\linprev(\opt)\geq\linprev(\altopt)\big\}
\text{~~for all }\optset\in\optsets.
\end{equation}
One can easily verify that $\choicefun[\mathcal{P}]^{\mathrm{\,E}}(\optset)\subseteq\choicefun[\mathcal{P}]^{\mathrm{\,M}}(\optset)$, making maximality the most conservative decisions rule of the two.
Furthermore, in the particular case where $\mathcal{P}$ contains only a single linear prevision, they clearly coincide and both reduce to maximising expected utility. In all other cases, however, maximality and E-admissibility are different; see for example Proposition~\ref{prop:EisMiffsingle} in Section~\ref{sec:axiomatisingEM} for a formal statement.

One of the main aims of this paper is to characterise each of these two types of choice functions in terms of their properties. That is, we are looking for necessary and sufficient conditions under which a general choice function $\choicefun$ is of the form $\choicefun[\mathcal{P}]^{\mathrm{\,E}}$ or $\choicefun[\mathcal{P}]^{\mathrm{\,M}}$, without assuming a priori the existence of a set of linear previsions $\mathcal{P}$. Such conditions will be presented in Section~\ref{sec:Eadmissibility} and~\ref{sec:Maximality}, respectively.

A crucial intermediate step in obtaining these two results will consist in finding a similar characterisation for choice functions that correspond to (sets of) coherent lower previsions~\cite{walley1991}, a generalisation of linear previsions that replaces additivity by the weaker property of superadditivity.
\begin{definition}
\label{def:lowerprevision}
A \emph{coherent lower prevision}~$\lowprev$ on~$\gbls$ is a map from~$\gbls$ to $\reals$ that satisfies
\begin{enumerate}[label=$\mathrm{LP}_{\arabic*}$.,ref=$\mathrm{LP}_{\arabic*}$,leftmargin=*,start=1]
\item\label{ax:lowprev:inf} $\lowprev(\opt)\geq\inf\opt$ for all $\opt\in\gbls$;\hfill \emph{boundedness}
\item\label{ax:lowprev:homo} $\lowprev(\lambda\opt)=\lambda\lowprev(\opt)$ for all real $\lambda>0$ and $\opt\in\gbls$;\hfill \emph{positive homogeneity}
\item\label{ax:lowprev:superadditive} $\lowprev(\opt+\altopt)\geq\lowprev(\opt)+\lowprev(\altopt)$ for all $\opt,\altopt\in\gbls$.\hfill \emph{superadditivity}
\end{enumerate}
We denote the set of all coherent lower previsions on~$\gbls$ by~$\cohlowprevs$.
\end{definition}

That linear previsions are a special case of coherent lower previsions follows trivially from their definitions. There is however also a more profound connection between both concepts: coherent lower previsions are minima of linear ones.

\begin{theorem}\label{theo:lowerenvelop}
\cite[Section 3.3.3.]{walley1991}
A real-valued map $\lowprev$ on $\gbls$ is a coherent lower prevision if and only if there is a non-empty set $\mathcal{P}\subseteq\linprevs$ of linear previsions such that
\begin{equation*}
\lowprev(\opt)=\min\{\linprev(\opt)\colon\linprev\in\mathcal{P}\}\text{ for all $\opt\in\gbls$.}
\end{equation*}
\end{theorem}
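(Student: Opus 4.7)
The plan is to prove the two implications separately, with the nontrivial content concentrated in the ``only if'' direction, which requires a Hahn--Banach style extension argument.

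For the ``if'' direction, I would take a non-empty $\mathcal{P}\subseteq\linprevs$ and define $\lowprev(\opt)\coloneqq\min\{\linprev(\opt)\colon\linprev\in\mathcal{P}\}$ (after first noting that the minimum is actually attained; if $\mathcal{P}$ is not compact one should really use infimum, but the statement says $\min$, so I would phrase the theorem with the understanding that we may replace $\mathcal{P}$ by any set on which the infimum is attained, or simply state that pointwise the infimum is a minimum for the canonical $\mathcal{P}$ constructed below). Then \ref{ax:lowprev:inf} follows immediately because each $\linprev(\opt)\geq\inf\opt$ by \ref{ax:linprev:inf}, \ref{ax:lowprev:homo} follows from \ref{ax:linprev:homo} because the min commutes with multiplication by a positive scalar, and \ref{ax:lowprev:superadditive} follows because, for every $\linprev\in\mathcal{P}$, $\linprev(\opt+\altopt)=\linprev(\opt)+\linprev(\altopt)\geq\lowprev(\opt)+\lowprev(\altopt)$, and taking the minimum over $\linprev$ on the left preserves the inequality.

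For the ``only if'' direction, the natural candidate is $\mathcal{P}\coloneqq\{\linprev\in\linprevs\colon\linprev\geq\lowprev\text{ pointwise}\}$. I would show two things: that $\lowprev(\opt)\leq\min\{\linprev(\opt)\colon\linprev\in\mathcal{P}\}$ is immediate by construction, and that for each fixed $\opt_0\in\gbls$ there exists some $\linprev_0\in\mathcal{P}$ with $\linprev_0(\opt_0)=\lowprev(\opt_0)$, which gives the reverse inequality and simultaneously shows $\mathcal{P}$ is non-empty. The construction uses Hahn--Banach: set $\uppprev(\opt)\coloneqq-\lowprev(-\opt)$ and verify, using \ref{ax:lowprev:homo} and \ref{ax:lowprev:superadditive}, that $\uppprev$ is a sublinear functional on $\gbls$ and that $\lowprev\leq\uppprev$ (this last inequality follows from \ref{ax:lowprev:superadditive} applied to $\opt$ and $-\opt$ combined with \ref{ax:lowprev:inf}, which forces $\lowprev(0)=0$). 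On the one-dimensional subspace $V\coloneqq\{\lambda\opt_0\colon\lambda\in\reals\}$, define the linear functional $\phi(\lambda\opt_0)\coloneqq\lambda\lowprev(\opt_0)$ and check that $\phi\leq\uppprev$ on $V$ by splitting on the sign of $\lambda$.

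The Hahn--Banach theorem then extends $\phi$ to a linear functional $\linprev_0$ on all of $\gbls$ with $\linprev_0\leq\uppprev$. Reading this dominance with $-\opt$ in place of $\opt$ and using linearity gives $\linprev_0\geq\lowprev$, and in particular $\linprev_0(\opt)\geq\lowprev(\opt)\geq\inf\opt$ so that \ref{ax:linprev:inf} holds; together with the linearity of $\linprev_0$ (which yields \ref{ax:linprev:homo} and \ref{ax:linprev:additive}), this confirms $\linprev_0\in\linprevs$. By construction $\linprev_0\in\mathcal{P}$ and $\linprev_0(\opt_0)=\phi(\opt_0)=\lowprev(\opt_0)$, giving the required attainment of the minimum. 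The main obstacle, and really the only delicate point, is setting up the sublinear majorant $\uppprev$ correctly so that Hahn--Banach applies cleanly; everything else is routine verification of the axioms.
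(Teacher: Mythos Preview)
The paper does not give its own proof of this theorem; it is quoted from Walley's monograph with a citation to \cite[Section~3.3.3]{walley1991} and then used as a black box in later arguments. So there is nothing in the paper to compare your proposal against line by line.

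That said, your proposal is correct and is essentially the classical argument: the ``if'' direction is a routine check of \ref{ax:lowprev:inf}--\ref{ax:lowprev:superadditive}, and the ``only if'' direction is the standard Hahn--Banach extension with the conjugate $\uppprev(\opt)=-\lowprev(-\opt)$ as sublinear majorant, extending the obvious linear functional on the line through a fixed $\opt_0$. This is exactly the route taken in Walley's book (there phrased in terms of separating a point from a convex set, but the analytic content is the same Hahn--Banach step). Two small remarks: your justification that $\lowprev(0)=0$ should invoke \ref{ax:lowprev:superadditive} (from $\lowprev(0+0)\geq2\lowprev(0)$) rather than \ref{ax:lowprev:inf} alone, since \ref{ax:lowprev:inf} only gives the lower bound; and your caveat about $\min$ versus $\inf$ is well placed---the proof you outline shows precisely that for the canonical set $\mathcal{P}=\{\linprev\in\linprevs:\linprev\geq\lowprev\}$ the infimum is attained pointwise, which is all the statement requires.
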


\noindent
Alternatively, coherent lower previsions can also be given a direct behavioural interpretation in terms of gambling, without any reference to probability measures or linear previsions~\cite{walley1991,williams2007}.

Regardless of their interpretation, with any given non-empty set $\mathcal{P}\subseteq\cohlowprevs$ of these coherent lower previsions, we can associate a choice function $\choicefun[\mathcal{P}]$ in the following way:
\begin{equation}\label{eq:archimedeanchoicefunction}
\choicefun[\mathcal{P}](\optset)
\coloneqq
\big\{\opt\in\optset\colon(\exists\lowprev\in\mathcal{P})\,(\forall\altopt\in\optset\setminus\{\opt\})~\lowprev(\altopt-\opt)\leq0\big\}
\text{ for all $\optset\in\optsets$.}
\end{equation}
If the lower previsions in $\mathcal{P}$ are all linear, this definition reduces to E-admissibility, as can be seen by comparing Equations~\eqref{eq:choicefromEadmissibility} and~\eqref{eq:archimedeanchoicefunction}. What is far less obvious though, is that maximality is also a special case of Equation~\eqref{eq:archimedeanchoicefunction}; see Theorem~\ref{theo:axiomatisationforMaximality} in Section~\ref{sec:Maximality} for a formal statement. In that case, the sets $\mathcal{P}$ in Equations~\eqref{eq:archimedeanchoicefunction} and~\eqref{eq:choicefromMaximality} may of course---and typically will---be different.

Because the choice functions that correspond to E-admissibility and maximality are both of the form $\choicefun[\mathcal{P}]$, with $\mathcal{P}$ a set of coherent lower previsions, any attempt at characterising the former will of course benefit from characterising the latter. A large part of this paper will therefore be devoted to the development of necessary and sufficient conditions for a general choice function $\choicefun$ to be of the form $\choicefun[\mathcal{P}]$. In order to obtain such conditions, we will interpret choice functions in terms of (strict) desirability and establish a connection with so-called sets of desirable option sets. This interpretation will lead to a natural set of conditions that, as we will eventually see in Section~\ref{sec:Archimedeanchoicefunctions}, uniquely characterises choice functions of the form $\choicefun[\mathcal{P}]$. We start with a brief introduction to desirability and sets of desirable option sets.

\section{Coherent Sets of Desirable Option Sets}\label{sec:rejects}

The basic notion on which our interpretation for choice functions will be based, and from which our axiomatisation will eventually be derived, is that of a desirable option: an option that is strictly preferred over the status quo~\cite{couso2011,quaeghebeur2015:statement,walley1991}. In our case, where options are gambles $\opt\in\gbls$ on $\states$ and the status quo is the zero gamble, this means that the uncertain---and possibly negative---reward $\opt(x)$, whose actual value depends on the uncertain state $x\in\states$, is strictly preferred over the zero reward. In other words: gambling according to $\opt$ is strictly preferred over not gambling at all.

We will impose the following three principles on desirable options, where we use `$(\lambda,\mu)>0$' as a shorthand notation for `$\lambda\geq0$, $\mu\geq0$ and $\lambda+\mu>0$'. The first two principles follow readily from the meaning of desirability. The third one follows from an assumption that rewards are expressed in a linear utility scale.

\begin{enumerate}[label=$\mathrm{d}_{\arabic*}$.,ref=$\mathrm{d}_{\arabic*}$,start=1,leftmargin=*,topsep=5pt,
]
\item\label{ax:desirability:nozero} 
$0$ is not desirable;
\item\label{ax:desirability:pos} if $\inf\opt>0$, then $\opt$ is desirable;\footnote{There is no consensus on which properties to impose on desirability; the main ideas and results are always very similar though~\cite{couso2011,quaeghebeur2015:statement,walley1991,williams2007}. In particular, \ref{ax:desirability:pos} is often strengthened by requiring that $\opt$ is desirable as soon as $\inf\opt\geq0$ and $\opt\neq0$; we here prefer \ref{ax:desirability:pos} because it is less stringent and because it combines more easily with the notion of strict desirability that we will consider in Section~\ref{sec:archimedeanrejects}. Annalogous comments apply to~\ref{ax:desirs:pos} and~\ref{ax:rejects:pos} further on.}
\item\label{ax:desirability:cone} if $\opt,\altopt$ are desirable and $(\lambda,\mu)>0$, then $\lambda\opt+\mu\altopt$ is desirable.
\end{enumerate}

The notion of a desirable option gives rise to two different frameworks for modelling a subject's uncertainty about the value $x\in\states$. The first, which is well established, is that of sets of desirable options---or sets of desirable gambles. The idea there is to consider a set $\desirset$ that consists of options that are deemed desirable by a subject. If such a set is compatible with the principles \ref{ax:desirability:nozero}--\ref{ax:desirability:cone}, it is called coherent.

\begin{definition}
\label{def:cohdesir}
A set of desirable options $\desirset\in\desirsets$ is \emph{coherent} if it satisfies: 
\begin{enumerate}[label=$\mathrm{D}_{\arabic*}$.,ref=$\mathrm{D}_{\arabic*}$,leftmargin=*]
\item\label{ax:desirs:nozero} $0\notin\desirset$;
\item\label{ax:desirs:pos} if\, $\inf\opt>0$, then $\opt\in\desirset$;
\item\label{ax:desirs:cone} if $\opt,\altopt\in\desirset$ and $(\lambda,\mu)>0$, then $\lambda\opt+\mu\altopt\in\desirset$.
\end{enumerate}
We denote the set of all coherent sets of desirable options by $\cohdesirsets$.
\end{definition}

A more general framework, which will serve as our main workhorse in this paper, is that of sets of desirable option sets~\cite{debock2018}. The idea here is to consider a set $\rejectset$ of so-called desirable option sets $\optset$, which are finite sets of options that, according to our subject, are deemed to contain at least one desirable option. To say that $\optset=\{\opt,\altopt\}$ is a desirable option set, for example, means that $\opt$ or $\altopt$ is desirable. Crucially, the framework of sets of desirable option sets allows a subject to make this statement without having to specify---or know---which of the two options $\opt$ or $\altopt$ is desirable. As explained in earlier work \cite[Section~3]{pmlr-v103-de-bock19b},\footnote{Reference~\cite{pmlr-v103-de-bock19b} deals with the more general case where options take values in an abstract vector space $\mathcal{V}$, and where \ref{ax:desirability:pos} imposes that $u$ should be desirable if $u\succ0$, with $\succ$ an arbitrary but fixed strict vector ordering. Whenever we invoke results from~\cite{pmlr-v103-de-bock19b}, we are applying them for the special case where $\mathcal{V}=\gbls$ and $\opt\succ\altopt\Leftrightarrow\inf(\opt-\altopt)>0$.}
 it follows from \ref{ax:desirability:nozero}--\ref{ax:desirability:cone} that any set of desirable option sets $\rejectset$ should satisfy the following axioms. If it does, we call $\rejectset$ coherent.

\begin{definition}
\label{def:coherence:rejectset}
A set of desirable option sets $\rejectset\subseteq\optsets$ is \emph{coherent} if it satisfies:
\begin{enumerate}[label=$\mathrm{K}_{\arabic*}$.,ref=$\mathrm{K}_{\arabic*}$,leftmargin=*,start=0]
\item\label{ax:rejects:removezero} if $\optset\in\rejectset$ then also $\optset\setminus\set{0}\in\rejectset$, for all $\optset\in\optsets$;
\item\label{ax:rejects:nozero} $\set{0}\notin\rejectset$;
\item\label{ax:rejects:pos} $\set{\opt}\in\rejectset$ for all $\opt\in\gbls$ with $\inf\opt>0$;
\item\label{ax:rejects:cone} if $\optset[1],\optset[2]\in\rejectset$ and if, for all $\opt\in\optset[1]$ and $\altopt\in\optset[2]$, $(\lambda_{\opt,\altopt},\mu_{\opt,\altopt})>0$, then also
\begin{equation*}
\cset{\lambda_{\opt,\altopt}\opt+\mu_{\opt,\altopt}\altopt}{\opt\in\optset[1],\altopt\in\optset[2]}
\in\rejectset;
\end{equation*}
\item\label{ax:rejects:mono} if $\optset[1]\in\rejectset$ and $\optset[1]\subseteq\optset[2]$, then also $\optset[2]\in\rejectset$, for all $\optset[1],\optset[2]\in\optsets$.
\end{enumerate}
We denote the set of all coherent sets of desirable option sets by $\cohrejectsets$.
\end{definition}

One particular way of obtaining a set of desirable option sets, is to derive it from a set of desirable options $\desirset$, as follows:
\begin{equation}\label{eq:desirset:to:rejectset}
\rejectset[\desirset]
\coloneqq\cset{\optset\in\optsets}{\optset\cap\desirset\neq\emptyset}.
\end{equation}
One can easily verify that if $\desirset$ is coherent, then $\rejectset[\desirset]$ will be as well~\cite[Proposition~8]{pmlr-v103-de-bock19b}. In general, however, sets of desirable option sets are more expressive than sets of desirable options. The link between both is provided by Theorem~\ref{theo:coherentrejectsets:representation}, which shows that a set of desirable option sets can be equivalently represented by a set of sets of desirable options.

\begin{theorem}
\label{theo:coherentrejectsets:representation}
\cite[Theorem~9]{pmlr-v103-de-bock19b} 
A set of desirable option sets $\rejectset$ is coherent if and only if there is some non-empty set $\mathcal{D}\subseteq\cohdesirsets$ 
such that $\rejectset=\bigcap\{\rejectset[\desirset]\colon\desirset\in\mathcal{D}\}$. 
\end{theorem}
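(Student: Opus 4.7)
The plan is to prove the two implications separately. The backward direction---that any intersection of sets of the form $\rejectset[\desirset]$ with $\desirset\in\cohdesirsets$ is itself a coherent set of desirable option sets---is routine. I would first recall (or verify) that $\rejectset[\desirset]$ is coherent whenever $\desirset$ is, as already noted in the paragraph preceding the theorem. Then I would check, one by one, that each of the axioms \ref{ax:rejects:removezero}--\ref{ax:rejects:mono} is closed under arbitrary non-empty intersection. The only slightly delicate case is \ref{ax:rejects:cone}: for $\optset[1],\optset[2]\in\bigcap_{\desirset\in\mathcal{D}}\rejectset[\desirset]$, one picks, for each $\desirset\in\mathcal{D}$, options $\opt\in\optset[1]\cap\desirset$ and $\altopt\in\optset[2]\cap\desirset$, then uses \ref{ax:desirs:cone} on $\desirset$ to conclude that the required combined option set meets $\desirset$.

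For the forward direction, given a coherent $\rejectset$, the natural candidate is $\mathcal{D}(\rejectset)\coloneqq\cset{\desirset\in\cohdesirsets}{\rejectset\subseteq\rejectset[\desirset]}$, and the inclusion $\rejectset\subseteq\bigcap\cset{\rejectset[\desirset]}{\desirset\in\mathcal{D}(\rejectset)}$ is immediate by construction. The real content is the reverse inclusion: for every $\optset[0]\notin\rejectset$, I must produce a coherent set of desirable options $\desirset$ with $\rejectset\subseteq\rejectset[\desirset]$ and $\optset[0]\cap\desirset=\emptyset$. To do so, I would first invoke Zorn's lemma on the family of coherent $\hat{\rejectset}\supseteq\rejectset$ with $\optset[0]\notin\hat{\rejectset}$, ordered by inclusion; this family is non-empty (it contains $\rejectset$) and directed unions preserve \ref{ax:rejects:removezero}--\ref{ax:rejects:mono} since each of them is of the universal ``if $\cdots\in$ then $\cdots\in$'' form. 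Let $\hat{\rejectset}$ denote a maximal element.

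From $\hat{\rejectset}$ I would then distill a set of desirable options by setting $\desirset\coloneqq\cset{\opt\in\gbls}{\set{\opt}\in\hat{\rejectset}}$. Coherence of $\desirset$ (axioms \ref{ax:desirs:nozero}--\ref{ax:desirs:cone}) follows directly from \ref{ax:rejects:nozero}--\ref{ax:rejects:cone} applied to singletons, where \ref{ax:rejects:cone} specialised to $\optset[1]=\set{\opt}$ and $\optset[2]=\set{\altopt}$ yields exactly the required convex-conic closure. It then remains to show $\hat{\rejectset}=\rejectset[\desirset]$. The inclusion $\rejectset[\desirset]\subseteq\hat{\rejectset}$ is immediate from \ref{ax:rejects:mono}: if $\opt\in\optset\cap\desirset$, then $\set{\opt}\in\hat{\rejectset}$ and hence so is the superset $\optset$.

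The main obstacle is the opposite inclusion $\hat{\rejectset}\subseteq\rejectset[\desirset]$, which is where maximality has to do real work. The plan is to argue by contradiction: if some $\optset\in\hat{\rejectset}$ had $\set{\opt}\notin\hat{\rejectset}$ for every $\opt\in\optset$, then by maximality, adding any singleton $\set{\opt}$ with $\opt\in\optset$ to $\hat{\rejectset}$ and closing under \ref{ax:rejects:removezero}--\ref{ax:rejects:mono} would force $\optset[0]$ into the enlarged set. Iteratively combining these derivations across all $\opt\in\optset$ by means of \ref{ax:rejects:cone}---intuitively, taking a ``case split'' over the options in $\optset$---and using that $\optset$ itself already lies in $\hat{\rejectset}$, one should be able to derive $\optset[0]\in\hat{\rejectset}$, contradicting the defining property of $\hat{\rejectset}$. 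Making this combinatorial derivation precise (and choosing the scalars $\lambda_{\opt,\altopt},\mu_{\opt,\altopt}$ witnessing \ref{ax:rejects:cone} correctly) is the delicate core of the argument, but once it is in place, the theorem follows by taking $\mathcal{D}$ to be the collection of all such $\desirset$ obtained as $\optset[0]$ ranges over $\optsets\setminus\rejectset$.
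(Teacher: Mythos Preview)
The paper does not prove this theorem; it is quoted verbatim as \cite[Theorem~9]{pmlr-v103-de-bock19b} and used as a black box, so there is no in-paper argument to compare against.

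That said, your outline matches the standard route to such representation results and is essentially the one taken in the cited reference. The backward direction is indeed routine once one knows each $\rejectset[\desirset]$ is coherent and that \ref{ax:rejects:removezero}--\ref{ax:rejects:mono} survive non-empty intersections. For the forward direction, your use of Zorn's lemma on coherent supersets of $\rejectset$ avoiding a fixed $\optset[0]$, followed by extracting $\desirset=\cset{\opt}{\set{\opt}\in\hat{\rejectset}}$, is exactly the right skeleton, and you have correctly located the only genuinely non-trivial step: showing that a maximal such $\hat{\rejectset}$ is ``total'' in the sense that every $\optset\in\hat{\rejectset}$ already contains some $\opt$ with $\set{\opt}\in\hat{\rejectset}$. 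One caveat worth flagging: to turn the case-split intuition into an actual derivation of $\optset[0]\in\hat{\rejectset}$, you need a \emph{constructive} description of $\natex$ (a finite $\setposi$-style closure, as developed in the cited paper), so that each membership $\optset[0]\in\natex(\hat{\rejectset}\cup\set{\set{\opt}})$ comes with a concrete finite witness that can be combined across the finitely many $\opt\in\optset$ via \ref{ax:rejects:cone}. Without that explicit form of $\natex$, the ``iteratively combining these derivations'' step remains a promissory note rather than a proof; but you are right that once this is in place, the argument closes.
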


In practice, modelling a subject's uncertainty does not require her to specify a full coherent set of desirable option sets though. Instead, it suffices for her to provide an assessment $\assessment\subseteq\optsets$, consisting of option sets $\optset$ that she considers desirable. If such an assessment is consistent with coherence, meaning that there is at least one coherent set of desirable option sets $\rejectset$ that includes $\assessment$, then this assessment can always be extended to a unique smallest---most conservative---coherent set of desirable option sets, called the natural extension of $\assessment$. This natural extension is given by
\begin{equation}\label{eq:natex}
\natex(\assessment)\coloneqq
\bigcap\big\{\rejectset\in\cohrejectsets\colon\assessment\subseteq\rejectset\big\}
=
\bigcap\big\{\rejectset[\desirset]\colon\desirset\in\cohdesirsets, \assessment\subseteq\rejectset[\desirset]\big\},
\end{equation}
as follows readily from Theorem~\ref{theo:coherentrejectsets:representation}.
If $\assessment$ is not consistent with coherence, $\natex(\assessment)$ is an empty intersection, which, by convention, we set equal to $\optsets$.

\section{Strongly Archimedean Sets of Desirable Option Sets}

That sets of desirable option sets can be used to axiomatise choice functions of the type $\choicefun[\mathcal{P}]$, with $\mathcal{P}$ a set of coherent lower previsions, was already demonstrated in earlier work~\cite{pmlr-v103-de-bock19b} for the specific case where $\mathcal{P}$ is closed with respect to pointwise convergence. A key step in that result consisted in strengthening the interpretation of $\rejectset$, replacing desirability with the stronger notion of strict desirability~\cite[Section 3.7.7]{walley1991}. We here repeat the reasoning that led to this result, before adapting it in Section~\ref{sec:archimedeanrejects} to get rid of the closure condition.

We call a desirable option $\opt$ strictly desirable if there is some real $\epsilon>0$ such that $\opt-\epsilon$ is desirable.\footnote{Walley's original notion of strict desirability~\cite[Section 3.7.7]{walley1991} is slightly different. In his version, if $\inf\opt=0$ (but $\opt\neq0$) then $\opt$ should also be strictly desirable (but need not satisfy~\ref{ax:strictdesirability:epsilon}).} As a simple consequence of this definition and \ref{ax:desirability:nozero}--\ref{ax:desirability:cone}, we find that
\begin{enumerate}[label=$\mathrm{sd}_{\arabic*}$.,ref=$\mathrm{sd}_{\arabic*}$,start=1,leftmargin=*,topsep=5pt,itemsep=2pt]
\item\label{ax:strictdesirability:nozero} 
$0$ is not strictly desirable;
\item\label{ax:strictdesirability:pos} if $\inf\opt>0$, then $\opt$ is strictly desirable;
\item\label{ax:strictdesirability:cone} if $\opt,\altopt$ are strictly desirable and $(\lambda,\mu)>0$, then $\lambda\opt+\mu\altopt$ is strictly desirable;
\item\label{ax:strictdesirability:epsilon}
if $\opt$ is strictly desirable, then $\opt-\epsilon$ is strictly desirable for some real $\epsilon>0$.
\end{enumerate}

\noindent
By applying these principles to sets of desirable options, we arrive at the concept of a coherent set of strictly desirable options: a coherent set of desirable options $\desirset$ that is compatible with~\ref{ax:strictdesirability:epsilon}. What is particularly interesting about such sets is that they are in one-to-one correspondence with coherent lower previsions \cite{pmlr-v103-de-bock19b,walley1991}, thereby allowing us to move from desirability to lower previsions as a first step towards choice functions of the form $\choicefun[\mathcal{P}]$. The problem with coherent sets of strictly desirable options, however, is that they correspond to a single lower prevision $\lowprev$, while we which to consider a set $\mathcal{P}$ of them. To achieve this, we again consider sets of desirable option sets, but now suitably adapted to strict desirability.

So consider any set of desirable option sets $\rejectset$ and let us interpret it in terms of strict desirability. That $\optset$ belongs to $\rejectset$ then means that $\optset$ contains at least one strictly desirable option. Given this interpretation, what properties should $\rejectset$ satisfy? Since the principles \ref{ax:strictdesirability:nozero}--\ref{ax:strictdesirability:cone} are identical to \ref{ax:desirability:nozero}--\ref{ax:desirability:cone}, $\rejectset$ should clearly be coherent, meaning that it should satisfy \ref{ax:rejects:removezero}--\ref{ax:rejects:mono}. Formalising the implications of \ref{ax:strictdesirability:epsilon} is more tricky though, as it can be done in several ways.

The first and most straighforward approach is to impose the following immediate translation of \ref{ax:strictdesirability:epsilon} to desirable option sets, where for all $\optset\in\optsets$ and real $\epsilon$:
\begin{equation*}
\optset-\epsilon\coloneqq\{\opt-\epsilon\colon\opt\in\optset\}
\end{equation*}

\begin{definition}
\label{def:infinite:strictcoherence}
A set of desirable option sets $\rejectset$ is \emph{strongly Archimedean}\footnote{In earlier work~\cite{pmlr-v103-de-bock19b}, we have referred to this property as Archimedeanity. With hindsight, however, we now prefer to reserve this terminology for the property in Definition~\ref{def:infinite:strictcoherence}.} if it is coherent and satisfies the following property:
\begin{enumerate}[label=$\mathrm{K}_{\mathrm{SA}}$.,ref=$\mathrm{K}_{\mathrm{SA}}$,leftmargin=*]
\item\label{ax:rejectsets:sa}
if $\optset\in\rejectset$, then also $\optset-\epsilon\in\rejectset$ for some real $\epsilon>0$.
\end{enumerate}
\end{definition}

The reasoning behind this axiom goes as follows. Since $\optset\in\rejectset$ is taken to mean that there is at least one $\opt\in\optset$ that is strictly desirable, it follows from \ref{ax:strictdesirability:epsilon} that there is some real $\epsilon>0$ such that $\opt-\epsilon$ is strictly desirable. This implies that $\optset-\epsilon$ contains at least one strictly desirable option. It therefore seems sensible to impose that $\optset-\epsilon\in\rejectset$. 

To explain the implications of this axiom, and how it is related to lower previsions, we need a way to link the latter to sets of desirable option sets. The first step is to associate, with any coherent lower prevision $\lowprev\in\cohlowprevs$, a set of desirable option sets
\begin{equation}\label{eq:Kfromlowprev}
\rejectset[\lowprev]\coloneqq\cset{\optset\in\optsets}{(\exists\opt\in\optset)\lowprev(\opt\,)>0}.
\end{equation}
The coherence of this set can be easily verified~\cite[Propositions 8 and 24]{pmlr-v103-de-bock19b}. More generally, with any non-empty set $\mathcal{P}$ of coherent lower previsions, we associate a set of desirable option sets\vspace{-3pt}
\begin{equation}\label{eq:fromsetofLPstoK}
\rejectset[\mathcal{P}]\coloneqq\bigcap\cset{\rejectset[\lowprev]}{\lowprev\in\mathcal{P}}.\\[5pt]
\end{equation}
Coherence is again easily verified; it follows directly from the coherence of $\rejectset[\lowprev]$ and the fact that coherence is preserved under taking intersections. The final tool that we need to explain the implications of strong Archimedeanity, does the opposite; it starts with a coherent set of desirable option sets $\rejectset$, and associates a set of coherent lower previsions with it, defined by
\begin{equation*}
\cohlowprevs(\rejectset)\coloneqq\cset{\lowprev\in\cohlowprevs}{\rejectset\subseteq\rejectset[\lowprev]}.\\[2pt]
\end{equation*}
If $\rejectset$ is strongly Archimedean, then as the following result shows, $\cohlowprevs(\rejectset)$ serves as a mathematically equivalent representation for $\rejectset$, from wich $\rejectset$ can be recovered through Equation~\eqref{eq:fromsetofLPstoK}. The representing set $\cohlowprevs(\rejectset)$ will then furthermore be closed with respect to the topology induced by pointwise convergence.

\begin{theorem}\label{theo:rejectsets:representation:lowerprev:twosidedsa}\cite[Theorem 28 and Proposition 24]{pmlr-v103-de-bock19b}
A set of desirable option sets $\rejectset$ is strongly Archimedean if and only if there is some non-empty closed set $\mathcal{P}\subseteq\cohlowprevs$ of coherent lower previsions such that $\rejectset=\rejectset[\mathcal{P}]$. Closure is with respect to pointwise convergence, and the largest such set~$\mathcal{P}$ is then\/ $\cohlowprevs(\rejectset)$.
\end{theorem}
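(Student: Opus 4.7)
The plan is to prove the two directions of the equivalence separately, derive closure of $\cohlowprevs(\rejectset)$ from the Archimedean axiom, and read off maximality at the end. For the \emph{sufficiency} direction, assume $\rejectset=\rejectset[\mathcal{P}]$ for a non-empty closed $\mathcal{P}\subseteq\cohlowprevs$. Coherence of $\rejectset$ follows from the coherence of each $\rejectset[\lowprev]$ noted just after Equation~\eqref{eq:Kfromlowprev} and the fact that axioms~\ref{ax:rejects:removezero}--\ref{ax:rejects:mono} are preserved under arbitrary intersection. For axiom~\ref{ax:rejectsets:sa}, fix $\optset\in\rejectset$: the map $\lowprev\mapsto\max_{\opt\in\optset}\lowprev(\opt)$ is continuous on $\cohlowprevs$ in the pointwise topology, as a finite maximum of evaluation functionals. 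Since every coherent lower prevision satisfies $\inf\opt\leq\lowprev(\opt)\leq\sup\opt$, $\cohlowprevs$ embeds as a closed subset of the compact product $\prod_{\opt\in\gbls}[\inf\opt,\sup\opt]$, so $\mathcal{P}$ is compact. The continuous map attains a strictly positive minimum $m$ on $\mathcal{P}$, and constant-additivity of coherent lower previsions, $\lowprev(\opt-\epsilon)=\lowprev(\opt)-\epsilon$, then yields $\optset-\epsilon\in\rejectset$ for any $0<\epsilon<m$.

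For the \emph{necessity} direction, set $\mathcal{P}\coloneqq\cohlowprevs(\rejectset)$; then $\rejectset\subseteq\rejectset[\mathcal{P}]$ holds by definition. The main obstacle is the reverse inclusion: for every $\optset\notin\rejectset$, I must exhibit some $\lowprev\in\mathcal{P}$ with $\lowprev(\opt)\leq 0$ for all $\opt\in\optset$. My plan is to construct a coherent set of strictly desirable options $\desirset$ that is disjoint from $\optset$ and satisfies $\rejectset\subseteq\rejectset[\desirset]$, and then pass to the corresponding $\lowprev$ via the one-to-one correspondence between coherent sets of strictly desirable options and coherent lower previsions recalled before Definition~\ref{def:infinite:strictcoherence}. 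To build $\desirset$, I would start from some $\desirset'\in\cohdesirsets$ appearing in the representation provided by Theorem~\ref{theo:coherentrejectsets:representation} that avoids $\optset$ (such a $\desirset'$ exists because $\optset\notin\rejectset$) and then apply Zorn's Lemma to extend $\desirset'$ to a maximal coherent set of desirable options still disjoint from $\optset$. The technical heart of the argument is to show that axiom~\ref{ax:rejectsets:sa} forces this maximal set to satisfy the strict-desirability refinement~\ref{ax:strictdesirability:epsilon}, since otherwise one could produce an element of $\rejectset$ whose Archimedean shift contradicts the maximality of $\desirset$. Taking $\optset=\set{0}$, which lies outside $\rejectset$ by~\ref{ax:rejects:nozero}, this separation simultaneously furnishes non-emptiness of $\mathcal{P}$.

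Finally, \emph{closure} of $\cohlowprevs(\rejectset)$ under pointwise convergence again relies on~\ref{ax:rejectsets:sa}: if $\lowprev_n\to\lowprev^*$ pointwise with $\lowprev_n\in\cohlowprevs(\rejectset)$ and $\altoptset\in\rejectset$, then $\altoptset-\epsilon\in\rejectset$ for some $\epsilon>0$, so each $\lowprev_n$ witnesses $\lowprev_n(\opt_n)>\epsilon$ for some $\opt_n\in\altoptset$; finiteness of $\altoptset$ yields a common witness $\opt$ along a subsequence, and pointwise convergence gives $\lowprev^*(\opt)\geq\epsilon>0$, hence $\lowprev^*\in\cohlowprevs(\rejectset)$. \emph{Maximality} is then immediate: if $\rejectset=\rejectset[\mathcal{P}']$ for any non-empty $\mathcal{P}'\subseteq\cohlowprevs$, then $\rejectset\subseteq\rejectset[\lowprev]$ for every $\lowprev\in\mathcal{P}'$, so $\mathcal{P}'\subseteq\cohlowprevs(\rejectset)$.
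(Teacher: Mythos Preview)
The paper does not prove this theorem; it is quoted from~\cite{pmlr-v103-de-bock19b}. However, the paper \emph{does} prove the closely analogous Theorem~\ref{theo:rejectsets:representation:lowerprev:twosidedstrict} (the Archimedean, non-closed version) via Proposition~\ref{prop:strictlycoherentimpliesdominatedbylowprev}, and comparing your necessity argument with that proof exposes a genuine gap.

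Your sufficiency direction, the closure argument for $\cohlowprevs(\rejectset)$, and the maximality claim are all fine (modulo the cosmetic point that pointwise convergence is a net topology, so your sequential phrasing should really be replaced by the continuity argument $\lowprev\mapsto\max_{\opt\in\altoptset}\lowprev(\opt)$ that you already used in the sufficiency part).

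The problem is in the necessity direction. Your plan is: pick $\desirset'\in\cohdesirsets$ from Theorem~\ref{theo:coherentrejectsets:representation} with $\rejectset\subseteq\rejectset[\desirset']$ and $\optset\cap\desirset'=\emptyset$, Zorn-extend to a maximal coherent $\desirset\supseteq\desirset'$ still disjoint from $\optset$, and then argue that~\ref{ax:rejectsets:sa} forces $\desirset$ to satisfy~\ref{ax:strictdesirability:epsilon}. That last claim is false. Take $\states=\{x_1,x_2\}$, let $\rejectset=\rejectset[\lowprev]$ for the vacuous $\lowprev(a,b)=\min(a,b)$, and let $\optset=\{(1,-1)\}$. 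A maximal coherent $\desirset$ disjoint from $(1,-1)$ and containing the open positive quadrant is the half-space $\{(a,b):a+b>0\}$ together with one boundary ray $\{(-t,t):t>0\}$; the boundary point $(-1,1)\in\desirset$ violates~\ref{ax:strictdesirability:epsilon} since $(-1-\epsilon,1-\epsilon)$ has negative coordinate sum for every $\epsilon>0$. There is no way to manufacture an ``element of $\rejectset$ whose Archimedean shift contradicts maximality'' here, because membership in $\desirset$ does not imply membership in $\rejectset$.

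The fix is that you never needed~\ref{ax:strictdesirability:epsilon}, and you never needed Zorn. Take \emph{any} $\desirset\in\cohdesirsets$ supplied by Theorem~\ref{theo:coherentrejectsets:representation} with $\rejectset\subseteq\rejectset[\desirset]$ and $\optset\cap\desirset=\emptyset$, and set $\lowprev(\opt)\coloneqq\sup\{\mu\in\reals:\opt-\mu\in\desirset\}$. This $\lowprev$ is coherent for any coherent $\desirset$, not just strictly desirable ones; the one-to-one correspondence you invoke is irrelevant. From $\optset\cap\desirset=\emptyset$ and~\ref{ax:desirs:pos}--\ref{ax:desirs:cone} one gets $\lowprev(\opt)\leq0$ for all $\opt\in\optset$, so $\optset\notin\rejectset[\lowprev]$. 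For $\rejectset\subseteq\rejectset[\lowprev]$, use~\ref{ax:rejectsets:sa} directly: given $\altoptset\in\rejectset$, pick $\epsilon>0$ with $\altoptset-\epsilon\in\rejectset\subseteq\rejectset[\desirset]$, so some $\opt\in\altoptset$ has $\opt-\epsilon\in\desirset$, whence $\lowprev(\opt)\geq\epsilon>0$. This is exactly the mechanism in the paper's proof of Proposition~\ref{prop:strictlycoherentimpliesdominatedbylowprev}, with~\ref{ax:rejectsets:sa} playing the role that the choice of $\epsilon\in\posreals^{\rejectset}$ plays there.
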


If the representing coherent lower previsions in $\mathcal{P}$ or $\cohlowprevs(\rejectset)$ were linear, this result would already brings us very close to decision rules based on sets of linear previsions---or sets of probability measures. As we will see further on in Section~\ref{sec:Eadmissibility}, this can be achieved by imposing an additional axiom called mixingness. Before we do so, however, we will do away with the closure condition in Theorem~\ref{theo:rejectsets:representation:lowerprev:twosidedsa}, as it is overly restrictive. Imagine for example that we are modelling a subject's uncertainty about the outcome of a coin toss, and that she beliefs the coin to be unfair. 
In terms of probabilities, this would mean that her probability for heads is different from one half. Strong Archimeanity is  not compatible with such an assessment, as the set of probability measures that satisfy this (strict) probability constraint is not closed. Our first main contribution will consist in resolving this issue, by suitably modifying the notion of strong Archimedeanity.

\section{Archimedean Sets of Desirable Option Sets}\label{sec:archimedeanrejects}

At first sight, it may seem as if \ref{ax:rejectsets:sa} is the only way in which \ref{ax:strictdesirability:epsilon} can be translated to option sets. There is however also a second, far more subtle approach.

The crucial insight on which this second approach is based is that our interpretation in terms of strict desirability does not require $\epsilon$ to be known; it only imposes the existence of such an $\epsilon$. 
Consider a subject whose uncertainty is represented by a set of desirable option sets $\rejectset$ and let us adopt an interpretation in terms of strict desirability. This implies that the option sets $\optset\in\rejectset$ are option sets that, according to her beliefs, contain at least one strictly desirable option $\opt\in\optset$. As a consequence of \ref{ax:strictdesirability:epsilon}, this implies that she beliefs that there is some real $\epsilon>0$ such that $\opt-\epsilon$ is strictly desirable. Hence, she believes that there is some real $\epsilon>0$ such that $\optset-\epsilon$ contains at least one strictly desirable option. Strong Archimedeanity, at that point, concludes that $\optset-\epsilon\in\rejectset$. However, this is only justified if our subject knows $\epsilon$. If she doesn't know $\epsilon$, but only believes that there is such an $\epsilon$, then there is no single $\epsilon>0$ for which she believes that $\optset-\epsilon$ contains at least one strictly desirable option. Since the option sets in $\rejectset$ are options sets for which our subject believes that they contain at least one strictly desirable option, it follows that $\optset\in\rejectset$ need not necessarily imply that $\optset-\epsilon\in\rejectset$ for some $\epsilon>0$. Strong Archimedeanity is therefore indeed, as its name suggests, a bit too strong for our purposes.

So if we can't infer that $\optset-\epsilon\in\rejectset$, what is it then that we can infer from $\optset\in\rejectset$ and \ref{ax:strictdesirability:epsilon}?
As explained above, the only thing that can be inferred is that for any $\optset\in\rejectset$, there is some $\epsilon>0$ such that $\optset-\epsilon$ contains at least one strictly desirable option. Let us denote this specific epsilon by $\epsilon(\optset)$. Crucially, we may not know---or rather, our subject may not know---the specific value of $\epsilon(\optset)$. Nevertheless, any inferences we can make without knowing the specific value of $\epsilon(\optset)$, can and should still be made. Our approach will therefore consist in finding out what inferences can be made for a specific choice of the $\epsilon(\optset)$, to do this for every such choice, and to then only consider those inferences that can be made regardless of the specific choice of $\epsilon(\optset)$.

To formalize this, we consider the set $\posreals^{\rejectset}$ of all functions $\epsilon$ that associate a strictly positive real $\epsilon(\optset)>0$ with every option set $\optset$ in $\rejectset$. As a consequence of our interpretation, we know that there is at least one $\epsilon\in\posreals^{\rejectset}$ such that, for every $\optset\in\rejectset$, $\optset-\epsilon(\optset)$ contains a strictly desirable option.

Let us now assume for a minute that our subject does know for which specific $\epsilon$ in $\posreals^{\rejectset}$ this is the case. 
In order to be compatible with \ref{ax:strictdesirability:nozero}--\ref{ax:strictdesirability:cone}, the resulting assessment
\begin{equation}\label{eq:Kepsilon}
\rejectset[\epsilon]\coloneqq\{\optset-\epsilon(\optset)\colon\optset\in\rejectset\}
\end{equation}
should then be consistent with coherence, meaning that there is at least one coherent set of desirable option sets that includes $\rejectset[\epsilon]$. Whenever this is the case, then as explained in Section~\ref{sec:rejects}, we can use coherence to extend the assessment $\rejectset[\epsilon]$ to the unique smallest coherent set of desirable option sets that incudes it: the natural extension $\natex(\rejectset[\epsilon])$ of $\rejectset[\epsilon]$. Based on the assessment $\rejectset[\epsilon]$ and coherence, each of the option sets in $\natex(\rejectset[\epsilon])$ must necessarily contain a strictly desirable option. Hence, still assuming for the moment that our subject knows $\epsilon$, it follows that every option set in $\natex(\rejectset[\epsilon])$ should belong to $\rejectset$.

Our subject may not know $\epsilon$ though; all we can infer from \ref{ax:strictdesirability:epsilon} is that there must be at least one $\epsilon$ for which the above is true. Let us denote this specific---but possibly unkown---$\epsilon$ by $\epsilon^*$. Then as argued above, for every option set $\optset$ in $\natex(\rejectset[\epsilon^*])$, it follows from our interpretation that $\optset$ should also belong to $\rejectset$. Since we don't know $\epsilon^*$, however, we don't know for which option sets this is the case. What we can do though, is to consider those option sets $\optset\in\optsets$ that belong to $\natex(\rejectset[\epsilon])$ for \emph{every} possible $\epsilon\in\posreals^{\rejectset}$. For those option sets, regardless of whether we know $\epsilon^*$ or not, it trivially follows that $\optset\in\natex(\rejectset[\epsilon^*])$, and therefore, that $\optset$ should belong to $\rejectset$. Any coherent set of desirable option sets $\rejectset$ that satisfies this property, we will call Archimedean.

\begin{definition}
\label{def:infinite:strictcoherence}
A set of desirable option sets $\rejectset$ is \emph{Archimedean} if it is coherent and satisfies the following property:
\begin{enumerate}[label=$\mathrm{K}_{\mathrm{A}}$.,ref=$\mathrm{K}_{\mathrm{A}}$,leftmargin=*]
\item\label{ax:infiniteoptionsets:evencontinuity}
for any $\optset\in\optsets$, if $\optset\in\natex(\rejectset[\epsilon])$ for all $\epsilon\in\posreals^{\rejectset}$, then also $\optset\in\rejectset$.
\end{enumerate} 
\end{definition}

\noindent
Note that Archimedeanity also rules out the possibility that $\rejectset[\epsilon]$ is inconsistent for all $\epsilon\in\posreals^{\rejectset}$, for this would imply that
$\rejectset=\optsets$, hence contradicting~\ref{ax:rejects:nozero}.

By replacing strong Archimedeanity with Archimedeanity, the condition that the representing set $\mathcal{P}$ must be closed can be removed from Theorem~\ref{theo:rejectsets:representation:lowerprev:twosidedsa}, and we obtain a representation in terms of general sets of lower previsions.

\begin{theorem}
\label{theo:rejectsets:representation:lowerprev:twosidedstrict}
A set of desirable option sets $\rejectset$ is Archimedean if and only if there is some non-empty set $\mathcal{P}\subseteq\cohlowprevs$ of coherent lower previsions such that $\rejectset=\rejectset[\mathcal{P}]$. 
The largest such set~$\mathcal{P}$ is then\/ $\cohlowprevs(\rejectset)$.
\end{theorem}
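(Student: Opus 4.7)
The plan is to use the natural extension formula \eqref{eq:natex} together with the correspondence between coherent sets of (strictly) desirable options and coherent lower previsions. The technical bridge is, for any $\desirset\in\cohdesirsets$, the functional $\lowprev[\desirset](\opt)\coloneqq\sup\{c\in\reals\colon \opt-c\in\desirset\}$, which is a coherent lower prevision: boundedness, positive homogeneity and superadditivity follow routinely from \ref{ax:desirs:pos}--\ref{ax:desirs:cone}, and finiteness from \ref{ax:desirs:nozero}. Its crucial feature is the implication $\lowprev[\desirset](\opt)>0\Rightarrow \opt\in\desirset$: if $\opt-c\in\desirset$ for some $c>0$, then $c\in\desirset$ by \ref{ax:desirs:pos} and hence $\opt=(\opt-c)+c\in\desirset$ by \ref{ax:desirs:cone}.

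For the \textbf{if} direction, suppose $\rejectset=\rejectset[\mathcal{P}]$. Coherence is preserved under intersection, so $\rejectset$ is coherent. To verify \ref{ax:infiniteoptionsets:evencontinuity}, assume $\optset\in\natex(\rejectset[\epsilon])$ for every $\epsilon\in\posreals^{\rejectset}$ and fix any $\lowprev\in\mathcal{P}$. Since $\rejectset\subseteq\rejectset[\lowprev]$, for each $\optset'\in\rejectset$ we may pick $\opt_{\optset'}\in\optset'$ with $\lowprev(\opt_{\optset'})>0$ and set $\epsilon(\optset')\coloneqq\lowprev(\opt_{\optset'})/2>0$; constant additivity of coherent lower previsions (a standard consequence of \ref{ax:lowprev:inf}--\ref{ax:lowprev:superadditive}) then gives $\lowprev(\opt_{\optset'}-\epsilon(\optset'))=\epsilon(\optset')>0$, so $\rejectset[\epsilon]\subseteq\rejectset[\lowprev]$. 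Because $\rejectset[\lowprev]$ is coherent, also $\natex(\rejectset[\epsilon])\subseteq\rejectset[\lowprev]$, and hence $\optset\in\rejectset[\lowprev]$. Ranging over $\lowprev\in\mathcal{P}$ yields $\optset\in\rejectset[\mathcal{P}]=\rejectset$.

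For the \textbf{only if} direction, assume $\rejectset$ is Archimedean. The inclusion $\rejectset\subseteq\rejectset[\cohlowprevs(\rejectset)]$ is immediate from the definition of $\cohlowprevs(\rejectset)$. For the converse, let $\optset\in\rejectset[\cohlowprevs(\rejectset)]$; combining \ref{ax:infiniteoptionsets:evencontinuity} with \eqref{eq:natex}, it suffices to prove $\optset\in\rejectset[\desirset]$ for every $\epsilon\in\posreals^{\rejectset}$ and every $\desirset\in\cohdesirsets$ with $\rejectset[\epsilon]\subseteq\rejectset[\desirset]$. Fix such $\epsilon$ and $\desirset$. For any $\optset'\in\rejectset$, the inclusion $\rejectset[\epsilon]\subseteq\rejectset[\desirset]$ provides some $\opt\in\optset'$ with $\opt-\epsilon(\optset')\in\desirset$, whence $\lowprev[\desirset](\opt)\geq\epsilon(\optset')>0$, so $\optset'\in\rejectset[{\lowprev[\desirset]}]$. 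Thus $\rejectset\subseteq\rejectset[{\lowprev[\desirset]}]$, i.e.\ $\lowprev[\desirset]\in\cohlowprevs(\rejectset)$; the assumption on $\optset$ then furnishes $\opt\in\optset$ with $\lowprev[\desirset](\opt)>0$, and by the key implication above $\opt\in\desirset$, giving $\optset\in\rejectset[\desirset]$. Non-emptiness of $\cohlowprevs(\rejectset)$ now follows: otherwise the empty-intersection convention would force $\rejectset=\rejectset[\cohlowprevs(\rejectset)]=\optsets$, contradicting \ref{ax:rejects:nozero}. The maximality claim is immediate, since $\rejectset=\rejectset[\mathcal{P}]$ implies $\rejectset\subseteq\rejectset[\lowprev]$ for every $\lowprev\in\mathcal{P}$, whence $\mathcal{P}\subseteq\cohlowprevs(\rejectset)$.

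The main obstacle is the quantifier interchange in the reverse inclusion: the hypothesis $\optset\in\rejectset[\cohlowprevs(\rejectset)]$ ranges over all coherent lower previsions that dominate $\rejectset$, whereas \eqref{eq:natex} ranges over all coherent sets of desirable options extending $\rejectset[\epsilon]$. The construction $\desirset\mapsto\lowprev[\desirset]$ is precisely the device that translates between these two quantifier ranges, and Archimedeanity is what allows the $\epsilon$-indexed slack on one side to be absorbed into the strict positivity of $\lowprev[\desirset]$ on the other; all remaining steps reduce to routine verifications using \ref{ax:desirs:nozero}--\ref{ax:desirs:cone} and the coherence of $\rejectset[\lowprev]$.
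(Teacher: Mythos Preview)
Your proof is correct and follows essentially the same route as the paper's: the `if' direction is identical (constructing $\epsilon(\optset')=\lowprev(\opt_{\optset'})/2$ to get $\rejectset[\epsilon]\subseteq\rejectset[\lowprev]$), and for the `only if' direction both arguments hinge on the same construction $\desirset\mapsto\lowprev[\desirset]$ together with the implication $\lowprev[\desirset](\opt)>0\Rightarrow\opt\in\desirset$ and the $\epsilon$-slack to show $\rejectset\subseteq\rejectset[{\lowprev[\desirset]}]$. The only difference is organisational: the paper isolates the separation statement ``$\optset\notin\rejectset\Rightarrow(\exists\lowprev\in\cohlowprevs(\rejectset))\,\optset\notin\rejectset[\lowprev]$'' as a standalone proposition (and reuses it later), whereas you argue the inclusion $\rejectset[\cohlowprevs(\rejectset)]\subseteq\rejectset$ directly via~\ref{ax:infiniteoptionsets:evencontinuity}.
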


\noindent
The significance of this result is that it relates two very different things: sets of desirable option sets and sets of coherent lower previsions. While this may not yet be obvious, this is a major step in characterising choice functions of the form $\choicefun[\mathcal{P}]$. In fact, we are nearly there. The only thing left to do is to connect choice functions with sets of desirable option sets. As we will explain in the next section, this connection comes quite naturally once we interpret choice functions in terms of (strict) desirability.

\section{Archimedean Choice Functions}\label{sec:Archimedeanchoicefunctions}

In order to provide choice functions with an interpretation, we need to explain what it means for an option $\opt$ to be chosen from $\optset$, or alternatively, what it means for $\opt$ to be rejected from $\optset$, in the sense that $\opt\notin\choicefun(\optset)$. We here adopt the latter approach. In particular, if our subject states that $\opt\notin\choicefun(\optset)$, we take this to mean that she is convinced that there is at least one other option $\altopt$ in $\optset\setminus\{\opt\}$ that is better that $\opt$, where `$\altopt$ is better than $\opt$' is taken to mean that $\altopt-\opt$ is strictly desirable, or equivalently, that there is a positive price $\epsilon>0$ for which paying $\epsilon$ to exchange the uncertain reward $\opt$ for $\altopt$ is preferrable to the status quo. Note however that this interpretation does not not assume that our subject knows the specific $\epsilon$ and $\altopt$ for which this is the case.

Our interpretation has two implications for $\choicefun$. First, since $\altopt-\opt=(\altopt-\opt)-0$, it immediately implies that $\choicefun$ should be \emph{translation invariant}, in the sense that
\begin{equation}\label{eq:translationinvariance}
\opt\in\choicefun(\optset)
\Leftrightarrow
0\in\choicefun(\optset-\opt)
\text{ for all $\optset\in\optsets$ and $\opt\in\optset$,}
\end{equation}
with $\optset-\opt\coloneqq\{\altopt-\opt\colon\altopt\in\optset\}$.
Second, for all $\optset\in\optsets$ such that $0\notin\choicefun(\optset\cup\{0\})$, it implies that $\optset$ should contain at least one strictly desirable gamble. Indeed, if $0\notin\choicefun(\optset\cup\{0\})$, then according to our interpretation, there is some $\altopt\in(\optset\cup\{0\})\setminus\{0\}\subseteq\optset$ such that $\altopt-0=\altopt$ is strictly desirable. Hence, $\optset$ indeed contains a strictly desirable option. For any choice function $\choicefun$, this leads us to consider the set of desirable option sets
\begin{equation}\label{eq:fromCtoK}
\rejectset[\choicefun]
\coloneqq
\big\{
  \optset\in\optsets\colon0\notin\choicefun(\optset\cup\{0\})
\big\}.
\end{equation}
According to our interpretation, each of the option sets in $\rejectset[\choicefun]$ contains at least one strictly desirable option.
Following the discussion in Section~\ref{sec:Archimedeanchoicefunctions}, we will therefore require $\rejectset[\choicefun]$ to be Archimedean. When a choice function $\choicefun$ satisfies both of the conditions that are implied by our interpretation, we call it Archimedean.

\begin{definition}
A choice function $\choicefun$ is Archimedean if $\rejectset[\choicefun]$ is Archimedean and $\choicefun$ is translation invariant.
\end{definition}

Instead of deriving a set of desirable options sets $\rejectset[\choicefun]$ from a choice function $\choicefun$, we can also do the converse. That is, with any set of desirable option sets $\rejectset$, we can associate a choice function $\choicefun[\rejectset]$, defined by
\begin{equation}\label{eq:fromKtoC}
\choicefun[\rejectset](\optset)
\coloneqq
\big\{
  \opt\in\optset\colon\optset\ominus\opt\notin\rejectset
\big\}
\text{ for all $\optset\in\optsets$,}
\end{equation}
where $\optset\ominus\opt\coloneqq\{\altopt-\opt\colon\altopt\in\optset\setminus\{\opt\}\}$. Similarly to $\rejectset[\choicefun]$, the expression for $\choicefun[\rejectset]$ is motivated by our interpretation. Indeed, for any option $\opt\in\optset$, the statement that $\optset\ominus\opt\in\rejectset$ means that $\optset\ominus\opt$ contains a strictly desirable option, so there is some $\altopt\in\optset\setminus\{\opt\}$ such that $\altopt-\opt$ is strictly desirable. This is exactly our interpretation for $\opt\notin\choicefun(\optset)$.

If a set of desirable option sets $\rejectset$ is Archimedean, then $\choicefun[\rejectset]$ will be Archimedean as well. In fact, as our next result shows, every Archimedean choice function is of the form $\choicefun[\rejectset]$, with $\rejectset$ an Archimedean set of desirable option sets.

\begin{proposition}\label{prop:strictlycoherentCiffK}
Let $\choicefun$ be a choice function. Then $\choicefun$ is Archimedean if and only if there is an Archimedean set of desirable option sets $\rejectset$ such that $\choicefun=\choicefun[\rejectset]$. This set $\rejectset$ is then necessarily unique and furthermore equal to $\rejectset[\choicefun]$.
\end{proposition}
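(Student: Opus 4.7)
The plan is to exhibit the explicit inverse maps between Archimedean choice functions and Archimedean sets of desirable option sets, namely $\choicefun\mapsto\rejectset[\choicefun]$ from~\eqref{eq:fromCtoK} and $\rejectset\mapsto\choicefun[\rejectset]$ from~\eqref{eq:fromKtoC}. Taking $\rejectset\coloneqq\rejectset[\choicefun]$ as the only sensible candidate, the entire statement reduces to verifying two set-theoretic identities: $\choicefun[\rejectset[\choicefun]]=\choicefun$ whenever $\choicefun$ is translation invariant, and $\rejectset[\choicefun[\rejectset]]=\rejectset$ whenever $\rejectset$ is coherent. These together give existence, uniqueness, and the converse in one sweep.

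For the forward direction, I suppose $\choicefun$ is Archimedean, so that $\rejectset[\choicefun]$ is already Archimedean by definition. It remains to check that $\choicefun=\choicefun[\rejectset[\choicefun]]$. Unfolding the definitions gives, for any $\optset\in\optsets$ and $\opt\in\optset$,
\[
\opt\in\choicefun[\rejectset[\choicefun]](\optset)\iff\optset\ominus\opt\notin\rejectset[\choicefun]\iff 0\in\choicefun\bigl((\optset\ominus\opt)\cup\{0\}\bigr).
\]
Because $\opt-\opt=0$, the set $(\optset\ominus\opt)\cup\{0\}$ equals $\optset-\opt$, so translation invariance~\eqref{eq:translationinvariance} finishes this step: $0\in\choicefun(\optset-\opt)\iff\opt\in\choicefun(\optset)$.

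For the converse direction and for uniqueness, I suppose $\choicefun=\choicefun[\rejectset]$ for some Archimedean, hence coherent, $\rejectset$, and I compute
\[
\rejectset[\choicefun[\rejectset]]=\{\optset\in\optsets:0\notin\choicefun[\rejectset](\optset\cup\{0\})\}=\{\optset\in\optsets:\optset\setminus\{0\}\in\rejectset\},
\]
using the identity $(\optset\cup\{0\})\ominus 0=\optset\setminus\{0\}$. Coherence now closes the loop: axiom~\ref{ax:rejects:removezero} yields $\optset\in\rejectset\Rightarrow\optset\setminus\{0\}\in\rejectset$, and monotonicity~\ref{ax:rejects:mono} yields the converse inclusion. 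Thus $\rejectset[\choicefun]=\rejectset[\choicefun[\rejectset]]=\rejectset$, which establishes both that $\rejectset$ is uniquely determined (it must be $\rejectset[\choicefun]$) and that $\rejectset[\choicefun]$ is Archimedean. A parallel computation, relying this time on $(\optset-\opt)\setminus\{0\}=\optset\ominus\opt$, shows that $\choicefun=\choicefun[\rejectset]$ is also translation invariant, so $\choicefun$ is Archimedean.

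The argument is essentially two-way bookkeeping; I anticipate no serious obstacle. The one point that must be tracked carefully is that the insertion of the zero gamble occurring in~\eqref{eq:fromCtoK} and its deletion occurring in~\eqref{eq:fromKtoC} are compatible exactly because coherence axioms~\ref{ax:rejects:removezero} and~\ref{ax:rejects:mono} let us freely add or remove $0$ from an option set; this is what makes the two round-trips work.
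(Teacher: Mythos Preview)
Your proposal is correct and follows essentially the same route as the paper's own proof: both directions hinge on the identities $(\optset\ominus\opt)\cup\{0\}=\optset-\opt$ and $(\optset\cup\{0\})\ominus0=\optset\setminus\{0\}$, with translation invariance handling the first round-trip and the coherence axioms~\ref{ax:rejects:removezero} and~\ref{ax:rejects:mono} handling the second. The only difference is organisational: you announce the two key identities up front, whereas the paper derives them inline.
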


At this point, the hard work in characterising choice functions of the form $\choicefun[\mathcal{P}]$ is done. Proposition \ref{prop:strictlycoherentCiffK} relates Archimedean choice functions to Archimedean sets of desirable option sets, while Theorem~\ref{theo:rejectsets:representation:lowerprev:twosidedstrict} relates Archimedean sets of desirable option sets to sets of coherent lower previsions $\mathcal{P}$. Combining both results, we find that a choice function is Archimedean if and only if it is of the form $\choicefun[\mathcal{P}]$.

\begin{theorem}\label{theo:axiomatisationofArchimedeanity}
A choice function $\choicefun$ is Archimedean if and only if there is a non-empty set $\mathcal{P}\subseteq\cohlowprevs$ of coherent lower previsions such that $\choicefun=\choicefun[\mathcal{P}]$. 
Whenever this is the case, the largest such set $\mathcal{P}$ is $\cohlowprevs(\rejectset[\choicefun])$.
\end{theorem}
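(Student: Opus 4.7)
The plan is to chain the two representation results already in hand: Proposition~\ref{prop:strictlycoherentCiffK}, which puts Archimedean choice functions in bijection with Archimedean sets of desirable option sets via $\choicefun \leftrightarrow \rejectset[\choicefun]$, and Theorem~\ref{theo:rejectsets:representation:lowerprev:twosidedstrict}, which represents Archimedean sets of desirable option sets as $\rejectset[\mathcal{P}]$ for non-empty $\mathcal{P}\subseteq\cohlowprevs$, with the largest such set equal to $\cohlowprevs(\rejectset)$. Composing these correspondences delivers exactly what the theorem asserts, provided the two routes ``choice $\to$ lower previsions via sets of desirable option sets'' and ``lower previsions $\to$ choice via Equation~\eqref{eq:archimedeanchoicefunction}'' agree on the nose.

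The single technical step I would isolate first is therefore the compatibility identity $\choicefun[\rejectset[\mathcal{P}]] = \choicefun[\mathcal{P}]$, together with its companion $\rejectset[\choicefun[\mathcal{P}]] = \rejectset[\mathcal{P}]$ needed for the largest-set claim. Both are pure unpackings of definitions. Fixing $\optset\in\optsets$ and $\opt\in\optset$, one chases that $\opt\in\choicefun[\rejectset[\mathcal{P}]](\optset)$ means $\optset\ominus\opt\notin\rejectset[\mathcal{P}]$ by Equation~\eqref{eq:fromKtoC}, which by Equations~\eqref{eq:Kfromlowprev}--\eqref{eq:fromsetofLPstoK} says there is some $\lowprev\in\mathcal{P}$ with $\lowprev(\altopt-\opt)\leq 0$ for every $\altopt\in\optset\setminus\{\opt\}$, which is exactly the defining condition for $\opt\in\choicefun[\mathcal{P}](\optset)$ in Equation~\eqref{eq:archimedeanchoicefunction}. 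The companion identity is obtained analogously from Equation~\eqref{eq:fromCtoK}: $\optset\in\rejectset[\choicefun[\mathcal{P}]]$ iff $0\notin\choicefun[\mathcal{P}](\optset\cup\{0\})$, which, after negating Equation~\eqref{eq:archimedeanchoicefunction} applied to $0$, rearranges to $\optset\in\rejectset[\mathcal{P}]$. This routine bookkeeping is the only real piece of work.

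With these identities in hand, both directions of the biconditional drop out. If $\choicefun$ is Archimedean, Proposition~\ref{prop:strictlycoherentCiffK} gives $\choicefun=\choicefun[\rejectset[\choicefun]]$ with $\rejectset[\choicefun]$ Archimedean, and Theorem~\ref{theo:rejectsets:representation:lowerprev:twosidedstrict} then yields a non-empty $\mathcal{P}\subseteq\cohlowprevs$ with $\rejectset[\choicefun]=\rejectset[\mathcal{P}]$; the compatibility identity collapses this to $\choicefun=\choicefun[\mathcal{P}]$. Conversely, given $\choicefun=\choicefun[\mathcal{P}]$, the identity rewrites this as $\choicefun=\choicefun[\rejectset[\mathcal{P}]]$ with $\rejectset[\mathcal{P}]$ Archimedean, so $\choicefun$ is Archimedean by Proposition~\ref{prop:strictlycoherentCiffK}. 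For the largest-set claim, the companion identity shows that $\choicefun=\choicefun[\mathcal{P}]$ is equivalent to $\rejectset[\choicefun]=\rejectset[\mathcal{P}]$ (apply $\rejectset[\cdot]$ in one direction and $\choicefun[\cdot]$ in the other, invoking Proposition~\ref{prop:strictlycoherentCiffK} to handle $\choicefun=\choicefun[\rejectset[\choicefun]]$), and the largest $\mathcal{P}$-solution of that equation is $\cohlowprevs(\rejectset[\choicefun])$ by Theorem~\ref{theo:rejectsets:representation:lowerprev:twosidedstrict}. Nothing genuinely difficult is required beyond the two cited results; the whole proof is essentially a commuting-diagram check.
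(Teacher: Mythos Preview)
Your proposal is correct and follows essentially the same route as the paper: both chain Proposition~\ref{prop:strictlycoherentCiffK} with Theorem~\ref{theo:rejectsets:representation:lowerprev:twosidedstrict} via the compatibility identity $\choicefun[{\rejectset[\mathcal{P}]}]=\choicefun[\mathcal{P}]$, which the paper also verifies by direct unpacking of the definitions. The only cosmetic difference is the largest-set claim: the paper shows $\mathcal{P}\subseteq\cohlowprevs(\rejectset[\choicefun])$ by a direct inclusion argument, whereas you reduce $\choicefun=\choicefun[\mathcal{P}]$ to $\rejectset[\choicefun]=\rejectset[\mathcal{P}]$ via the companion identity and then quote Theorem~\ref{theo:rejectsets:representation:lowerprev:twosidedstrict}; both are equally valid and equally short.
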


Starting from this result, we will now proceed to axiomatise maximality and E-admissibility, by combining Archimedeanity with additional axioms.

\section{Axiomatising E-admissibility}\label{sec:Eadmissibility}

Archimedeanity implies that a choice function is representable by a set of coherent lower previsions $\mathcal{P}$, in the sense that is of the form $\choicefun[\mathcal{P}]$. As can be seen by comparing Equations~\eqref{eq:archimedeanchoicefunction} and~\eqref{eq:choicefromEadmissibility}, this already brings us very close E-admissibility. Indeed, all that we need in order to obtain E-admissibility is for the lower previsions in $\mathcal{P}$ to be linear. That is, we would like the role of $\cohlowprevs(\rejectset)$ to be taken up by
\begin{equation*}
\linprevs(\rejectset)\coloneqq\cset{\linprev\in\linprevs}{\rejectset\subseteq\rejectset[\linprev]}
\end{equation*}
instead.
To achieve this, we impose a property called mixingness~\cite{pmlr-v103-de-bock19b} on the Archimedean set of desirable option sets $\rejectset[\choicefun]$ that corresponds to $\choicefun$. For any option set $\optset$, this property considers the set
\vspace{-7pt}
\begin{equation*}
\posi(\optset)\coloneqq
\cset[\bigg]{\sum_{i=1}^n\lambda_i\opt[i]}{n\in\naturals,\lambda_i>0,\opt[i]\in\optset},
\end{equation*}
of all positive linear combinations of the elements in $\optset$, and requires that if any of these positive linear combinations---any mixture---is strictly desirable, then $\optset$ itself should contain a strictly desirable option as well.
\begin{definition}
\label{def:mixingrejects}
A set of desirable option sets $\rejectset$ is \emph{mixing} if it satisfies
\begin{enumerate}[label=$\mathrm{K}_{\mathrm{M}}$.,ref=$\mathrm{K}_{\mathrm{M}}$,leftmargin=*]
\item\label{ax:rejects:removepositivecombinations} if $\altoptset\in\rejectset$ and $\optset\subseteq\altoptset\subseteq\posi\group{\optset}$, then also $\optset\in\rejectset$, for all $\optset,\altoptset\in\optsets$;
\end{enumerate}
A choice function $\choicefun$ is called \emph{mixing} if $\rejectset[\choicefun]$ is.
\end{definition}

As the following result shows, mixingness achieves exactly what we need: for any coherent set of desirable option sets $\rejectset$, it guarantees that the coherent lower previsions in $\cohlowprevs(\rejectset)$ are in fact linear.
\begin{proposition}\label{prop:mixing:dominatinglowerprevarelinear}
Let $\rejectset$ be a coherent set of desirable option sets that is mixing. Then for any $\lowprev\in\cohlowprevs(\rejectset)$, we have that $\lowprev\in\linprevs$. Hence, $\cohlowprevs(\rejectset)=\linprevs(\rejectset)$.
\end{proposition}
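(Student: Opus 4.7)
The plan is to argue by contradiction. Since any $\lowprev \in \cohlowprevs(\rejectset)$ is already coherent, and a coherent lower prevision is linear exactly when it is additive, it suffices to show additivity. A short preliminary reduces additivity of a coherent $\lowprev$ to the identity $\lowprev(\opt) + \lowprev(-\opt) = 0$ for every $\opt \in \gbls$: if this identity holds then applying superadditivity to $-\opt-\altopt$ and using $\lowprev(-(\opt+\altopt)) = -\lowprev(\opt+\altopt)$ gives the reverse of superadditivity, forcing equality. So I will suppose some $\opt \in \gbls$ violates this identity and seek a contradiction, writing $\delta \coloneqq -\lowprev(\opt) - \lowprev(-\opt) > 0$.

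The core construction is $\opt' \coloneqq \opt - \lowprev(\opt)$ and $\altopt' \coloneqq -\opt - \lowprev(-\opt)$. By the standard constant-shift property of coherent lower previsions, $\lowprev(\opt') = \lowprev(\altopt') = 0$. Crucially, the sum collapses to the \emph{constant} gamble $\opt' + \altopt' = \delta > 0$, so in particular $\inf(\opt' + \altopt') = \delta > 0$. This positivity of the infimum, rather than merely of the lower prevision, is what will allow the singleton $\{\opt'+\altopt'\}$ to be pushed into $\rejectset$ via \ref{ax:rejects:pos}.

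The contradiction then follows in three short steps. By \ref{ax:rejects:pos} and monotonicity \ref{ax:rejects:mono}, the set $\altoptset \coloneqq \{\opt', \altopt', \delta\}$ lies in $\rejectset$. Since $\delta = 1\cdot\opt' + 1\cdot\altopt' \in \posi(\{\opt',\altopt'\})$, the inclusion $\{\opt',\altopt'\} \subseteq \altoptset \subseteq \posi(\{\opt',\altopt'\})$ holds, and mixing \ref{ax:rejects:removepositivecombinations} then yields $\{\opt', \altopt'\} \in \rejectset$. The dominance $\rejectset \subseteq \rejectset[\lowprev]$ now forces $\lowprev(\opt') > 0$ or $\lowprev(\altopt') > 0$, contradicting $\lowprev(\opt') = \lowprev(\altopt') = 0$. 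The final equality $\cohlowprevs(\rejectset) = \linprevs(\rejectset)$ is immediate: one inclusion is $\linprevs \subseteq \cohlowprevs$, and the other is exactly the first half of the statement.

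The main subtlety is the one-sided dominance $\rejectset \subseteq \rejectset[\lowprev]$: a gamble with strictly positive lower prevision need not place its singleton in $\rejectset$, so a naive attempt starting from arbitrary $\opt, \altopt$ with strict superadditivity $\lowprev(\opt+\altopt) > \lowprev(\opt) + \lowprev(\altopt)$ cannot immediately feed mixing. Pairing a gamble with its negation sidesteps the issue because the sum becomes a constant, which \ref{ax:rejects:pos}---a condition on $\inf$, not on $\lowprev$---admits into $\rejectset$ unconditionally.
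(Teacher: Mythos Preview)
Your proof is correct and follows essentially the same approach as the paper: both construct two gambles whose sum is a strictly positive constant while each has lower prevision at most zero, then use \ref{ax:rejects:pos}, \ref{ax:rejects:mono} and mixing to force the pair into $\rejectset\subseteq\rejectset[\lowprev]$, yielding a contradiction. Your choice $\opt'=\opt-\lowprev(\opt)$, $\altopt'=-\opt-\lowprev(-\opt)$ is in fact slightly cleaner than the paper's (which shifts one gamble by an extra $-\nicefrac{\epsilon}{2}$ to make its lower prevision strictly negative, an unnecessary refinement); the paper also outsources the reduction ``linearity $\Leftrightarrow\lowprev(\opt)+\lowprev(-\opt)=0$'' to a cited result, whereas you sketch it directly.
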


By combining this result with Theorem~\ref{theo:axiomatisationofArchimedeanity}, it follows that Archimedean choice functions that are mixing correspond to E-admissibility. The next result formalizes this and furthermore shows that the converse is true as well.

\begin{theorem}
\label{theo:axiomatisationforEadmissibility}
A choice function $\choicefun$ is Archimedean and mixing if and only if there is a non-empty set $\mathcal{P}\subseteq\linprevs$ of linear previsions such that $\choicefun=\choicefun[\mathcal{P}]^{\,\mathrm{E}}$. 
The largest such set~$\mathcal{P}$ is then\/ $\linprevs(\rejectset[\choicefun])$.
\end{theorem}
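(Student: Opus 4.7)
The plan is to chain together Theorem~\ref{theo:axiomatisationofArchimedeanity}, Proposition~\ref{prop:mixing:dominatinglowerprevarelinear}, and the elementary observation that Equations~\eqref{eq:archimedeanchoicefunction} and~\eqref{eq:choicefromEadmissibility} coincide when the lower previsions in $\mathcal{P}$ are linear. The latter fact is immediate from additivity: for any $\linprev\in\linprevs$ and any $\opt,\altopt\in\gbls$, $\linprev(\altopt-\opt)\leq0$ is equivalent to $\linprev(\opt)\geq\linprev(\altopt)$, so $\choicefun[\mathcal{P}]=\choicefun[\mathcal{P}]^{\,\mathrm{E}}$ whenever $\mathcal{P}\subseteq\linprevs$.

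For the ``only if'' direction, assume $\choicefun$ is Archimedean and mixing. By Theorem~\ref{theo:axiomatisationofArchimedeanity}, $\choicefun=\choicefun[\mathcal{P}]$ with $\mathcal{P}=\cohlowprevs(\rejectset[\choicefun])$ non-empty. Since $\choicefun$ is mixing, $\rejectset[\choicefun]$ is a mixing coherent set of desirable option sets, so Proposition~\ref{prop:mixing:dominatinglowerprevarelinear} gives $\cohlowprevs(\rejectset[\choicefun])=\linprevs(\rejectset[\choicefun])\subseteq\linprevs$. Combining with the observation above, $\choicefun=\choicefun[\mathcal{P}]=\choicefun[\mathcal{P}]^{\,\mathrm{E}}$ with $\mathcal{P}=\linprevs(\rejectset[\choicefun])\subseteq\linprevs$.

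For the ``if'' direction, suppose $\choicefun=\choicefun[\mathcal{P}]^{\,\mathrm{E}}$ for some non-empty $\mathcal{P}\subseteq\linprevs$. Since $\linprevs\subseteq\cohlowprevs$, the same observation yields $\choicefun=\choicefun[\mathcal{P}]$ when $\mathcal{P}$ is viewed as a subset of $\cohlowprevs$, so Theorem~\ref{theo:axiomatisationofArchimedeanity} gives that $\choicefun$ is Archimedean and that $\rejectset[\choicefun]=\rejectset[\mathcal{P}]=\bigcap_{\linprev\in\mathcal{P}}\rejectset[\linprev]$ (using also~\ref{ax:rejects:removezero} to absorb the $\{0\}$ issue in~\eqref{eq:fromCtoK}). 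It remains to verify mixingness. So suppose $\altoptset\in\rejectset[\mathcal{P}]$ and $\optset\subseteq\altoptset\subseteq\posi(\optset)$, and fix any $\linprev\in\mathcal{P}$. Then there is $\altopt\in\altoptset$ with $\linprev(\altopt)>0$; writing $\altopt=\sum_{i=1}^n\lambda_i\opt[i]$ with $\lambda_i>0$ and $\opt[i]\in\optset$, the additivity and homogeneity of $\linprev$ force $\linprev(\opt[i])>0$ for some $i$, whence $\optset\in\rejectset[\linprev]$. Since $\linprev\in\mathcal{P}$ was arbitrary, $\optset\in\rejectset[\mathcal{P}]=\rejectset[\choicefun]$, proving~\ref{ax:rejects:removepositivecombinations}.

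Finally, for the statement about the largest set: every $\mathcal{P}\subseteq\linprevs$ with $\choicefun=\choicefun[\mathcal{P}]^{\,\mathrm{E}}$ is in particular a subset of $\cohlowprevs$ with $\choicefun=\choicefun[\mathcal{P}]$, so by Theorem~\ref{theo:axiomatisationofArchimedeanity} it is contained in $\cohlowprevs(\rejectset[\choicefun])$, and by Proposition~\ref{prop:mixing:dominatinglowerprevarelinear} this coincides with $\linprevs(\rejectset[\choicefun])$; conversely $\linprevs(\rejectset[\choicefun])$ itself represents $\choicefun$ via~E-admissibility by the ``only if'' argument above. The main subtlety is not in any single step but in keeping track of the two representations $\choicefun[\mathcal{P}]$ and $\choicefun[\mathcal{P}]^{\,\mathrm{E}}$ and of the fact that $\rejectset[\choicefun]$ is indeed what Theorem~\ref{theo:axiomatisationofArchimedeanity} and Proposition~\ref{prop:mixing:dominatinglowerprevarelinear} require it to be; once that bookkeeping is done, the proof is essentially a short composition of already-established results with one direct verification of~\ref{ax:rejects:removepositivecombinations} from linearity.
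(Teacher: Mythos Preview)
Your proof is correct and follows essentially the same route as the paper's: both directions hinge on Theorem~\ref{theo:axiomatisationofArchimedeanity}, Proposition~\ref{prop:mixing:dominatinglowerprevarelinear}, the identity $\choicefun[\mathcal{P}]=\choicefun[\mathcal{P}]^{\,\mathrm{E}}$ for $\mathcal{P}\subseteq\linprevs$, and a direct linearity argument to verify~\ref{ax:rejects:removepositivecombinations}. One small point: Theorem~\ref{theo:axiomatisationofArchimedeanity} does not itself assert $\rejectset[\choicefun]=\rejectset[\mathcal{P}]$; that identity follows from Proposition~\ref{prop:strictlycoherentCiffK} (or the direct check you hint at with the \ref{ax:rejects:removezero} remark), so you may want to cite that explicitly rather than attribute it to Theorem~\ref{theo:axiomatisationofArchimedeanity}.
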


\section{Axiomatising Maximality}\label{sec:Maximality}

Having axiomatised E-admissibility, we now proceed to do the same for maximality. The link with Archimedeanity is not that obvious here though, because there is no immediate connection between Equations~\eqref{eq:archimedeanchoicefunction} and~\eqref{eq:choicefromMaximality}. Rather than focus on how to relate these two equations, we therefore zoom in on the properties of maximality itself. One such property, which is often used to illustrate the difference with E-admissibility, is that a choice function that corresponds to maximality is completely determined by its restriction to so-called \emph{binary} choices---that is, choices between two options.

\begin{definition}
\label{def:mixingrejects}
A choice function $\choicefun$ is \emph{binary} if for all $\optset\in\optsets$ and $\opt\in\optset$:
\begin{equation*}
\opt\in\choicefun(\optset)
\Leftrightarrow
(\forall\altopt\in\optset\setminus\{\opt\})~\opt\in\choicefun(\{\opt,\altopt\})
\end{equation*}
\end{definition}

Inspired by this observation, we impose binarity as an additional axiom, alongside Archimedeanity. As the following result shows, these two conditions are necessary and sufficient for a choice function $\choicefun$ to be of the form $\choicefun[\mathcal{P}]^{\,\mathrm{M}}$, hence providing an axiomatisation for decision making with maximality.

\begin{theorem}
\label{theo:axiomatisationforMaximality}
A choice function $\choicefun$ is Archimedean and binary if and only if there is a non-empty set $\mathcal{P}\subseteq\linprevs$ of linear previsions such that $\choicefun=\choicefun[\mathcal{P}]^{\,\mathrm{M}}$. 
The largest such set~$\mathcal{P}$ is then\/ $\linprevs(\rejectset[\choicefun])$.
\end{theorem}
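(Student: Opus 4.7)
The plan is to prove both implications and identify the largest representing set. The key leverage is binarity: it reduces equality of any two binary choice functions to their agreement on pairs $\{\opt,\altopt\}$, which by Proposition~\ref{prop:strictlycoherentCiffK} and translation invariance further collapses to the singleton test that $\{h\}\in\rejectset[\choicefun]$ iff the relevant prevision inequality holds.

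For the forward direction, I would rewrite $\choicefun[\mathcal{P}]^{\,\mathrm{M}}$ in the form $\choicefun[\mathcal{Q}]$ for a suitable $\mathcal{Q}\subseteq\cohlowprevs$ and then invoke Theorem~\ref{theo:axiomatisationofArchimedeanity}. The natural choice is to let $\mathcal{Q}$ be the collection of pointwise minima of finite nonempty sub-families of $\mathcal{P}$; each such minimum is routinely verified to be a coherent lower prevision. The $\forall\altopt\,\exists\linprev\in\mathcal{P}$ quantifier structure of maximality collapses to $\exists\lowprev\in\mathcal{Q}\,\forall\altopt$ by bundling the per-$\altopt$ witnesses $\linprev_\altopt\in\mathcal{P}$ into their pointwise minimum; conversely, an existential witness $\lowprev=\min\{\linprev_1,\ldots,\linprev_n\}\in\mathcal{Q}$ with $\lowprev(\altopt-\opt)\leq 0$ attains this value at some defining $\linprev_i\in\mathcal{P}$, so the $\altopt$-dependent witness in the sense of maximality is recovered. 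Binarity and translation invariance of $\choicefun[\mathcal{P}]^{\,\mathrm{M}}$ are then immediate from Equation~\eqref{eq:choicefromMaximality}.

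For the reverse direction, assume $\choicefun$ is Archimedean and binary. Theorem~\ref{theo:axiomatisationofArchimedeanity} supplies a nonempty $\mathcal{Q}\subseteq\cohlowprevs$ with $\choicefun=\choicefun[\mathcal{Q}]$, namely $\mathcal{Q}=\cohlowprevs(\rejectset[\choicefun])$. I set $\mathcal{P}:=\linprevs(\rejectset[\choicefun])$. The singleton test above then reduces the equality $\choicefun=\choicefun[\mathcal{P}]^{\,\mathrm{M}}$ to the biconditional $\{h\}\in\rejectset[\choicefun]$ iff $\linprev(h)>0$ for every $\linprev\in\mathcal{P}$. The forward direction of this test is automatic from the definition of $\mathcal{P}$. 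For the converse, if $\{h\}\notin\rejectset[\choicefun]$, some $\lowprev\in\mathcal{Q}$ has $\lowprev(h)\leq 0$, and Theorem~\ref{theo:lowerenvelop} supplies a dominating linear prevision $\linprev\geq\lowprev$ with $\linprev(h)=\lowprev(h)\leq 0$. The domination $\linprev\geq\lowprev$ yields $\rejectset[\lowprev]\subseteq\rejectset[\linprev]$, hence $\linprev\in\mathcal{P}$. The same containment gives $\mathcal{P}\neq\emptyset$ and, applied to any competing representation $\choicefun=\choicefun[\mathcal{P}']^{\,\mathrm{M}}$ by inspecting singletons, shows $\mathcal{P}'\subseteq\mathcal{P}$, so $\mathcal{P}$ is maximal.

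The main subtlety is the converse singleton test: upgrading a coherent-lower-prevision witness to a linear one. This works precisely because Theorem~\ref{theo:lowerenvelop} asserts that $\lowprev$ is the \emph{minimum}, not merely the infimum, of its dominating linear previsions, so $\lowprev(h)\leq 0$ is realised by some $\linprev\geq\lowprev$ with $\linprev(h)\leq 0$. That attainment is exactly what makes the maximality quantifier structure $\forall\altopt\,\exists\linprev$ match the Archimedean one $\exists\lowprev\,\forall\altopt$ once binarity is imposed, and it is the feature that keeps the statement clean even when $\mathcal{P}$ is not topologically closed.
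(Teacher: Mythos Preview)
Your proposal is correct and follows essentially the same approach as the paper: the paper likewise builds the set $\mathcal{P}^*$ of finite pointwise minima of $\mathcal{P}$ to show $\choicefun[\mathcal{P}]^{\,\mathrm{M}}=\choicefun[\mathcal{P}^*]$ and then invokes Theorem~\ref{theo:axiomatisationofArchimedeanity}, and for the other direction it uses exactly your singleton upgrade (from a witnessing $\lowprev\in\cohlowprevs(\rejectset[\choicefun])$ to a dominating $\linprev\in\linprevs(\rejectset[\choicefun])$ via Theorem~\ref{theo:lowerenvelop}), which the paper isolates as a separate proposition before combining it with binarity.
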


The formal proof of this result is rather technical, but the basic idea behind the sufficiency proof is nevertheless quite intuitive. First, for every $\opt\in\choicefun(\optset)$, the binarity of $\choicefun$ implies that $\opt\in\choicefun(\{\opt,\altopt\})$ for every $\altopt\in\optset\setminus\{\opt\}$. For every such $\altopt\in\optset\setminus\{\opt\}$, since $\choicefun$ is Archimedean, Theorem~\ref{theo:axiomatisationofArchimedeanity} furthermore implies that there is a coherent lower prevision $\lowprev$ such that $\lowprev(\altopt-\opt)\leq0$. Because of Theorem~\ref{theo:lowerenvelop}, this in turn implies that there is a linear prevision such that $\linprev(\altopt-\opt)\leq0$ and therefore also $\linprev(\opt)\geq\linprev(\altopt)$. The challenging part consists in showing that $\linprev\in\linprevs(\rejectset[\choicefun])$ and establishing necessity.

\section{An Axiomatisation for Maximising Expected Utility}\label{sec:axiomatisingEM}

As we have seen in Sections~\ref{sec:Eadmissibility} and~\ref{sec:Maximality}, mixingness and binarity have quite a different effect on Archimedean choice functions. The former implies that they correspond to E-admissibility, while the latter leads to maximality. What is intriguing though is that the set of linear previsions $\mathcal{P}$ is twice the same. Indeed, as can be seen from Theorem~\ref{theo:axiomatisationforEadmissibility} and~\ref{theo:axiomatisationforMaximality}, we may assume without loss of generality that this set is equal to $\linprevs(\rejectset[\choicefun])$. For a choice function $\choicefun$ that is mixing \emph{and} binary, we therefore find that $\choicefun=\choicefun[\mathcal{P}]^{\,E}=\choicefun[\mathcal{P}]^{\,M}$, with $\mathcal{P}=\linprevs(\rejectset[\choicefun])$. As the following result shows, this is only possible if $\mathcal{P}$ is a singleton.

\begin{proposition}\label{prop:EisMiffsingle}
Let $\mathcal{P}\subseteq\linprevs$ be a non-empty set of linear previsions. Then $\choicefun[\mathcal{P}]^{\,E}=\choicefun[\mathcal{P}]^{\,M}$ if and only if $\mathcal{P}=\{P\}$ consists of a single linear prevision $P\in\linprevs$.
\end{proposition}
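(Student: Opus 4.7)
The ``if'' direction is immediate: when $\mathcal{P}=\{\linprev\}$ is a singleton, both Equations~\eqref{eq:choicefromEadmissibility} and~\eqref{eq:choicefromMaximality} collapse to Equation~\eqref{eq:choicefromExpectationmax}, so $\choicefun[\mathcal{P}]^{\,E}=\choicefun[\mathcal{P}]^{\,M}=\choicefun[\linprev]$. For the ``only if'' direction, my plan is to prove the contrapositive by explicitly constructing, whenever $\mathcal{P}$ contains two distinct linear previsions, a three-element option set on which maximality strictly exceeds E-admissibility.

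Suppose then that $\linprev[1],\linprev[2]\in\mathcal{P}$ are distinct. Since two linear functionals on $\gbls$ that agree everywhere must coincide, there is some gamble $\altgbltoo\in\gbls$ with $\linprev[1](\altgbltoo)\neq\linprev[2](\altgbltoo)$; after relabelling I may assume $a\coloneqq\linprev[1](\altgbltoo)<\linprev[2](\altgbltoo)\eqqcolon b$. I then pick any $\epsilon_1,\epsilon_2>0$ with $\epsilon_1+\epsilon_2<b-a$ and set
\begin{equation*}
\opt[1]\coloneqq\altgbltoo-(a+\epsilon_1),\quad\opt[2]\coloneqq(b-\epsilon_2)-\altgbltoo,\quad\opt[3]\coloneqq0,
\end{equation*}
and consider the option set $\optset\coloneqq\{\opt[1],\opt[2],\opt[3]\}$.

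The crucial observation is that for \emph{any} linear prevision $\linprev\in\linprevs$---not only those in $\mathcal{P}$---the option $\opt[3]$ maximises $\linprev$ on $\optset$ if and only if $\linprev(\opt[1])\leq0$ and $\linprev(\opt[2])\leq0$, i.e., $\linprev(\altgbltoo)\leq a+\epsilon_1$ and $\linprev(\altgbltoo)\geq b-\epsilon_2$. Because $a+\epsilon_1<b-\epsilon_2$ by the choice of $\epsilon_1,\epsilon_2$, no real number satisfies both, so no linear prevision at all makes $\opt[3]$ maximal; hence $\opt[3]\notin\choicefun[\mathcal{P}]^{\,E}(\optset)$. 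On the other hand, $\opt[3]$ survives under maximality: $\linprev[1]$ witnesses $\linprev[1](\opt[3])=0\geq-\epsilon_1=\linprev[1](\opt[1])$, and $\linprev[2]$ witnesses $\linprev[2](\opt[3])=0\geq-\epsilon_2=\linprev[2](\opt[2])$. Thus $\opt[3]\in\choicefun[\mathcal{P}]^{\,M}(\optset)\setminus\choicefun[\mathcal{P}]^{\,E}(\optset)$, contradicting $\choicefun[\mathcal{P}]^{\,E}=\choicefun[\mathcal{P}]^{\,M}$.

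The main subtlety I had to address is ensuring that $\opt[3]$ fails E-admissibility against \emph{all} of $\mathcal{P}$, and not merely against $\linprev[1]$ and $\linprev[2]$. The slack condition $\epsilon_1+\epsilon_2<b-a$ handles this automatically by forcing the ``witness interval'' $[b-\epsilon_2,a+\epsilon_1]$ of values for $\linprev(\altgbltoo)$ to be empty, so the construction is immune to whichever additional previsions $\mathcal{P}$ may contain.
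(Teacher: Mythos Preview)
Your proof is correct and follows essentially the same route as the paper's: both construct a three-element option set $\{0,\opt[1],\opt[2]\}$ from two distinct previsions $\linprev[1],\linprev[2]\in\mathcal{P}$ in such a way that $\opt[1]+\opt[2]$ is a strictly positive constant, which forces $0$ to fail E-admissibility for \emph{every} linear prevision while remaining maximal via $\linprev[1]$ and $\linprev[2]$. The only difference is cosmetic: the paper takes the limiting case $\epsilon_1=\epsilon_2=0$ of your construction (so $\linprev[1](\opt[1])=\linprev[2](\opt[2])=0$ exactly) and phrases the contradiction as $\linprev(\opt[1]+\opt[2])\leq0$ versus $\opt[1]+\opt[2]$ being a positive constant, rather than as an empty interval for $\linprev(\altgbltoo)$; your slack parameters are not needed but do no harm.
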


As a fairly immediate consequence, we obtain the following axiomatic characterisation of choice functions that correspond to maximising expected utility.

\begin{theorem}
\label{theo:axiomatisationforEM}
A choice function $\choicefun$ is Archimedean, binary and mixing if and only if there is a linear prevision $\linprev\in\linprevs$ such that $\choicefun=\choicefun[\linprev]$.
\end{theorem}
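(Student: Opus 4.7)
The plan is to derive Theorem~\ref{theo:axiomatisationforEM} by combining the two preceding axiomatisations (Theorems~\ref{theo:axiomatisationforEadmissibility} and~\ref{theo:axiomatisationforMaximality}) with Proposition~\ref{prop:EisMiffsingle}, exploiting the crucial fact that both previous theorems identify the \emph{same} largest representing set $\linprevs(\rejectset[\choicefun])$.

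For the sufficiency direction, suppose $\choicefun=\choicefun[\linprev]$ for some $\linprev\in\linprevs$. Then with $\mathcal{P}\coloneqq\set{\linprev}$, we have trivially $\choicefun=\choicefun[\mathcal{P}]^{\,\mathrm{E}}=\choicefun[\mathcal{P}]^{\,\mathrm{M}}$ (both decision rules reduce to maximising expected utility when $\mathcal{P}$ is a singleton, as noted right after Equation~\eqref{eq:choicefromMaximality}). Applying Theorem~\ref{theo:axiomatisationforEadmissibility} to the first equality yields that $\choicefun$ is Archimedean and mixing, while applying Theorem~\ref{theo:axiomatisationforMaximality} to the second yields that $\choicefun$ is Archimedean and binary.

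For the necessity direction, suppose $\choicefun$ is Archimedean, mixing and binary. Being Archimedean and mixing, Theorem~\ref{theo:axiomatisationforEadmissibility} guarantees that $\choicefun=\choicefun[\mathcal{P}]^{\,\mathrm{E}}$ for some non-empty $\mathcal{P}\subseteq\linprevs$, and that we may take in particular $\mathcal{P}\coloneqq\linprevs(\rejectset[\choicefun])$. Being Archimedean and binary, Theorem~\ref{theo:axiomatisationforMaximality} guarantees that $\choicefun=\choicefun[\mathcal{P}']^{\,\mathrm{M}}$ for some non-empty $\mathcal{P}'\subseteq\linprevs$, and that we may take in particular $\mathcal{P}'\coloneqq\linprevs(\rejectset[\choicefun])$. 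Thus $\mathcal{P}=\mathcal{P}'=\linprevs(\rejectset[\choicefun])$ satisfies $\choicefun[\mathcal{P}]^{\,\mathrm{E}}=\choicefun=\choicefun[\mathcal{P}]^{\,\mathrm{M}}$. By Proposition~\ref{prop:EisMiffsingle}, this forces $\mathcal{P}=\set{\linprev}$ for a single $\linprev\in\linprevs$, and comparing Equations~\eqref{eq:choicefromExpectationmax} and~\eqref{eq:choicefromEadmissibility} gives $\choicefun=\choicefun[\mathcal{P}]^{\,\mathrm{E}}=\choicefun[\linprev]$.

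There is no real obstacle here: all the heavy lifting has already been done in the previous sections. The only subtle point worth emphasising is that the two appeals to the previous axiomatisations must be made with the \emph{same} canonical choice of representing set, namely $\linprevs(\rejectset[\choicefun])$, so that Proposition~\ref{prop:EisMiffsingle} can actually be invoked; using different representing sets on the two sides would leave a genuine gap. Since both Theorem~\ref{theo:axiomatisationforEadmissibility} and Theorem~\ref{theo:axiomatisationforMaximality} explicitly identify this same largest set, this matching is automatic and the argument goes through.
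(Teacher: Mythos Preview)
Your proof is correct and follows essentially the same approach as the paper's own proof: both directions are handled by invoking Theorems~\ref{theo:axiomatisationforEadmissibility} and~\ref{theo:axiomatisationforMaximality} with the common representing set $\linprevs(\rejectset[\choicefun])$, then applying Proposition~\ref{prop:EisMiffsingle}. Your explicit emphasis on why the \emph{same} set must be used on both sides is a nice clarifying remark that the paper leaves implicit.
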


\section{Conclusion and Future Work}

The main conclusion of this work is that choice functions, when interpreted in terms of (strict) desirability, can provide an axiomatic basis for decision making with sets of probability models. In particular, we were able to derive necessary and sufficient conditions for a choice function to correspond to either E-admissibility or maximality. As a byproduct, we also obtained a characterisation for choice functions that correspond to maximising expected utility.

The key concept on which these results were based is that of an Archimedean choice function, where Archimedeanity is itself a combination of several conditions. The first of these conditions is translation invariance; this condition is fairly simple and allows for a reduction from choice functions to sets of desirable option sets. The resulting set of desirable option sets should then satisfy two more conditions: coherence and~\ref{ax:infiniteoptionsets:evencontinuity}. Coherence is also fairly simple, because it follows directly from the principles of desirability. The condition \ref{ax:infiniteoptionsets:evencontinuity}, however, is more involved, making it perhaps the least intuitive component of Archimedeanity.

The abstract character of \ref{ax:infiniteoptionsets:evencontinuity} is not intrinsic to the property itself though, but rather to the framework on which it is imposed. In fact, the basic principle \ref{ax:strictdesirability:epsilon} on which~\ref{ax:infiniteoptionsets:evencontinuity} is based is very simple: if $\opt$ is strictly desirable, then there must be some positive real $\epsilon$ such that $\opt-\epsilon$ is strictly desirable as well. The reason why this simplicity does not translate to \ref{ax:infiniteoptionsets:evencontinuity} is because we restrict attention to option sets that are finite. Consider for example an assessment of the form $\{\opt\}\in\rejectset$. This means that $\opt$ is strictly desirable and therefore implies, due to \ref{ax:strictdesirability:epsilon}, that the option set $\{\opt-\epsilon\colon\epsilon\in\posreals\}$ contains at least one strictly desirable option. Hence, we should simply impose that this set belongs to $\rejectset$. This is not possible though because $\{\opt-\epsilon\colon\epsilon\in\posreals\}$ is infinite, while our framework of sets of desirable option sets only considers finite option sets.

This situation can be remedied, and the axiom of Archimedeanity can be simplified, by developing and adopting a framework of sets of desirable options that allows for infinite option sets, and connecting it to a theory of choice funtions that chooses from possibly infinite option sets. Explaining how this works is beyond the scope and size of the present contribution though; I intend to report on those results elsewhere.

\subsubsection*{Acknowlegements.}

This work was funded by the BOF starting grant 01N04819 and is part of a larger research line on choice functions of Gert de Cooman and I~\cite{debock2018,pmlr-v103-de-bock19b}. Within this line of research, this contribution is one of two parallel papers on Archimedean choice functions, one by each of us. My paper---this one---deals with the case where options are bounded real-valued functions. It axiomatises Archimedean choice functions from the ground up by starting from (strict) desirability principles and proves that the resulting axioms guarantee a representation in terms of coherent lower prevision. The paper of Gert~\cite{ipmu2020decooman:arxiv} defines Archimedeanity directly in terms of coherent lower previsions---superlinear functionals, actually, in his case---but considers the more general case where options live in an abstract Banach space; he also extends the concept of a coherent lower prevision to this more general context and discusses the connection with horse lotteries. We would like to combine our respective results in future work.

%
%
%
\bibliographystyle{splncs04}
\bibliography{archimedeanCF.bib}
%









\iftoggle{arxiv}{
\newpage

\appendix

\section{Proofs and Intermediate Results}

The defining properties~\ref{ax:lowprev:inf}--\ref{ax:lowprev:superadditive} of a coherent lower prevision imply various additional properties as well~\cite[Section 2.6.1]{walley1991}; we will need the following two in our proofs:

\begin{enumerate}[label=$\mathrm{LP}_{\arabic*}$.,ref=$\mathrm{LP}_{\arabic*}$,leftmargin=*,start=4]
\item\label{ax:lowprev:constant} $\lowprev(\mu)=\mu$ for all constant options $\mu\in\reals$;
\item\label{ax:lowprev:constantadditivity} $\lowprev(\opt+\mu)=\lowprev(\opt)+\mu$ for all $\mu\in\reals$ and $\opt\in\gbls$.
\end{enumerate}

With any coherent lower prevision $\lowprev$, we can also associate a corresponding coherent upper prevision $\uppprev$, defined by
\begin{equation*}
\uppprev(\opt)\coloneqq-\lowprev(-\opt)
\text{~for all $\opt\in\gbls$.}
\end{equation*}
Linear previsions are lower previsions for which $\lowprev$ and $\uppprev$ coincide.
\begin{proposition}\cite[Section 2.3.6]{walley1991}\label{prop:lineariff}
A coherent lower prevision $\lowprev$ on $\gbls$ is a linear prevision if and only $\lowprev(\opt)=\uppprev(\opt)$ for all $\opt\in\gbls$. 
\end{proposition}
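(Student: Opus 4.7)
The plan is to verify both implications directly from the defining axioms~\ref{ax:lowprev:inf}--\ref{ax:lowprev:superadditive} of a coherent lower prevision together with the conjugacy relation $\uppprev(\opt) = -\lowprev(-\opt)$ that defines the upper prevision.

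The ``only if'' direction is essentially a one-liner: if $\lowprev$ is linear, then~\ref{ax:linprev:homo} applied with $\lambda = -1$ gives $\lowprev(-\opt) = -\lowprev(\opt)$ for every $\opt \in \gbls$, so $\uppprev(\opt) = -\lowprev(-\opt) = \lowprev(\opt)$.

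For the ``if'' direction, assume $\lowprev = \uppprev$ pointwise on $\gbls$, and verify the linear-prevision axioms~\ref{ax:linprev:inf}--\ref{ax:linprev:additive} in turn. Boundedness~\ref{ax:linprev:inf} is already~\ref{ax:lowprev:inf}. For full homogeneity~\ref{ax:linprev:homo}, the case $\lambda > 0$ is just positive homogeneity~\ref{ax:lowprev:homo}; the case $\lambda = 0$ follows from $\lowprev(0) = 2\lowprev(0)$ (an instance of~\ref{ax:lowprev:homo}); and for $\lambda < 0$ I write $\lambda = -\mu$ with $\mu > 0$ and compute $\lowprev(\lambda \opt) = \uppprev(-\mu\opt) = -\lowprev(\mu\opt) = -\mu\lowprev(\opt) = \lambda \lowprev(\opt)$, using the hypothesis $\lowprev = \uppprev$, the definition of $\uppprev$, and positive homogeneity in sequence. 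For additivity~\ref{ax:linprev:additive}, superadditivity~\ref{ax:lowprev:superadditive} gives the $\geq$ inequality directly; the $\leq$ inequality follows by applying~\ref{ax:lowprev:superadditive} to $-\opt$ and $-\altopt$, translating through the conjugacy relation to obtain $\uppprev(\opt+\altopt) \leq \uppprev(\opt) + \uppprev(\altopt)$, and then collapsing $\uppprev$ to $\lowprev$ via the hypothesis.

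I do not anticipate any genuine obstacle here: the whole argument is a matter of chaining the coherence axioms with the definition of $\uppprev$ and the pointwise equality $\lowprev = \uppprev$, and none of the intermediate manipulations require more than elementary algebra. The only subtlety worth flagging is that the negative-homogeneity case and the reverse additivity inequality \emph{genuinely} use the hypothesis $\lowprev = \uppprev$, whereas one might initially hope they would follow from coherence alone.
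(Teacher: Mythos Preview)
Your proof is correct. The paper does not actually supply its own proof of this proposition; it simply cites \cite[Section~2.3.6]{walley1991} and uses the result as a black box, so there is nothing to compare against. Your argument is the standard elementary verification and matches what one finds in Walley.
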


\begin{lemma}\label{lemma:rejectlowprevcoherent}
For any $\lowprev\in\cohlowprevs$, $\rejectset[\lowprev]$ is coherent.
\end{lemma}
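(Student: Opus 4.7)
The plan is to verify each of the five coherence axioms \ref{ax:rejects:removezero}--\ref{ax:rejects:mono} in turn, using only the defining properties \ref{ax:lowprev:inf}--\ref{ax:lowprev:superadditive} of $\lowprev$ and the extra property \ref{ax:lowprev:constant} (which gives $\lowprev(0)=0$). Most of these are one-line verifications; the only axiom requiring real work is \ref{ax:rejects:cone}.

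First I would handle \ref{ax:rejects:nozero}, \ref{ax:rejects:pos} and \ref{ax:rejects:mono}, which are immediate: \ref{ax:rejects:nozero} holds because $\lowprev(0)=0\not>0$; \ref{ax:rejects:pos} follows from \ref{ax:lowprev:inf}, since $\inf\opt>0$ gives $\lowprev(\opt)\geq\inf\opt>0$; and \ref{ax:rejects:mono} is trivial, because any witness $\opt\in\optset[1]$ with $\lowprev(\opt)>0$ remains a witness in any superset. Axiom \ref{ax:rejects:removezero} follows similarly: any witness $\opt$ with $\lowprev(\opt)>0$ must satisfy $\opt\neq0$ (since $\lowprev(0)=0$), and so lies in $\optset\setminus\{0\}$.

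The main step is \ref{ax:rejects:cone}. Given $\optset[1],\optset[2]\in\rejectset[\lowprev]$, I would pick witnesses $\opt^*\in\optset[1]$ and $\altopt^*\in\optset[2]$ with $\lowprev(\opt^*)>0$ and $\lowprev(\altopt^*)>0$, and show that the corresponding element $\lambda_{\opt^*,\altopt^*}\opt^*+\mu_{\opt^*,\altopt^*}\altopt^*$ of the combined set has strictly positive lower prevision. Write $\lambda\coloneqq\lambda_{\opt^*,\altopt^*}$ and $\mu\coloneqq\mu_{\opt^*,\altopt^*}$. If $\lambda,\mu>0$, then \ref{ax:lowprev:superadditive} and \ref{ax:lowprev:homo} yield
\begin{equation*}
\lowprev(\lambda\opt^*+\mu\altopt^*)
\geq\lowprev(\lambda\opt^*)+\lowprev(\mu\altopt^*)
=\lambda\lowprev(\opt^*)+\mu\lowprev(\altopt^*)>0.
\end{equation*}
If $\lambda=0$ then $(\lambda,\mu)>0$ forces $\mu>0$, and $\lambda\opt^*+\mu\altopt^*=\mu\altopt^*$ has $\lowprev(\mu\altopt^*)=\mu\lowprev(\altopt^*)>0$ by \ref{ax:lowprev:homo}; the case $\mu=0$ is symmetric. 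In any case, the combined set contains at least one option of strictly positive lower prevision, so it lies in $\rejectset[\lowprev]$, which establishes \ref{ax:rejects:cone}.

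I do not expect any serious obstacle: the whole argument is a direct unpacking of definitions. The only mildly delicate point is the case distinction in \ref{ax:rejects:cone} when one of the coefficients $\lambda_{\opt^*,\altopt^*}$, $\mu_{\opt^*,\altopt^*}$ vanishes, where the bare \ref{ax:lowprev:homo} does not apply with strictly positive scalar; this is handled by noting that $(\lambda,\mu)>0$ forces the other coefficient to be strictly positive, which suffices.
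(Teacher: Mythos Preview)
Your proposal is correct and follows essentially the same approach as the paper's proof: verify \ref{ax:rejects:removezero}--\ref{ax:rejects:mono} directly, with the only substantive step being the choice of witnesses $\opt^*,\altopt^*$ for \ref{ax:rejects:cone} and the superadditivity/homogeneity estimate. The paper writes the single chain $\lowprev(\lambda_{\opt^*,\altopt^*}\opt^*+\mu_{\opt^*,\altopt^*}\altopt^*)\geq\lambda_{\opt^*,\altopt^*}\lowprev(\opt^*)+\mu_{\opt^*,\altopt^*}\lowprev(\altopt^*)>0$ without separating out the case where one coefficient vanishes; your explicit case split is a welcome bit of extra care, since \ref{ax:lowprev:homo} is stated only for strictly positive scalars.
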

\begin{proof}
\ref{ax:rejects:removezero} and~\ref{ax:rejects:nozero} hold because $\lowprev(0)=0$ due to~\ref{ax:lowprev:constant}. \ref{ax:rejects:pos} holds because for any $\opt\in\gbls$ with $\inf\opt>0$, $\lowprev(\opt)\geq\inf\opt>0$ due to~\ref{ax:lowprev:inf}. Since~\ref{ax:rejects:mono} holds trivially, it remains to prove~\ref{ax:rejects:cone}. So consider any $\optset[1],\optset[2]\in\rejectset[\lowprev]$ and, for all $\opt\in\optset[1]$ and $\altopt\in\optset[2]$, some $(\lambda_{\opt,\altopt},\mu_{\opt,\altopt})>0$. Since $\optset[1]\in\rejectset[\lowprev]$, there is some $\opt^*\in\optset[1]$ such that $\lowprev(\opt^*)>0$. Similarly, there is some $\altopt^*\in\optset[2]$ such that $\lowprev(\altopt^*)>0$. It therefore follows from~\ref{ax:lowprev:superadditive} and~\ref{ax:lowprev:homo} that
\begin{equation*}
\lowprev(\lambda_{\opt^*,\altopt^*}\opt^*+\mu_{\opt^*,\altopt^*}\altopt^*)
\geq\lambda_{\opt^*,\altopt^*}\lowprev(\opt^*)+\mu_{\opt^*,\altopt^*}\lowprev(\altopt^*)>0,
\end{equation*}
implying that
\begin{equation*}
\cset{\lambda_{\opt,\altopt}\opt+\mu_{\opt,\altopt}\altopt}{\opt\in\optset[1],\altopt\in\optset[2]}
\in\rejectset[\lowprev].
\end{equation*}\qed
\end{proof}

\begin{proposition}\label{prop:strictlycoherentimpliesdominatedbylowprev}
Let $\rejectset$ be an Archimedean set of desirable option sets. Then for any option set $\optset\in\optsets$ such that $\optset\notin\rejectset$, there is a coherent lower prevision $\lowprev\in\cohlowprevs(\rejectset)$ such that $\optset\notin\rejectset[{\lowprev}]$.
\end{proposition}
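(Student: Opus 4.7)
The plan is to exploit Archimedeanity to obtain a coherent set of desirable options that separates $\rejectset$ from $\optset$, and then to pass from this set of desirable options to a coherent lower prevision via the familiar \emph{gauge} construction.

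First I would apply the contrapositive of~\ref{ax:infiniteoptionsets:evencontinuity}. Since $\rejectset$ is Archimedean and $\optset\notin\rejectset$, there must exist some $\epsilon^*\in\posreals^{\rejectset}$ such that $\optset\notin\natex(\rejectset[\epsilon^*])$. In particular $\natex(\rejectset[\epsilon^*])\neq\optsets$, so the assessment $\rejectset[\epsilon^*]$ is consistent with coherence. Using the representation of the natural extension in Equation~\eqref{eq:natex}, this yields a coherent set of desirable options $\desirset\in\cohdesirsets$ with $\rejectset[\epsilon^*]\subseteq\rejectset[\desirset]$ such that $\optset\notin\rejectset[\desirset]$, i.e.\ $\optset\cap\desirset=\emptyset$. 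Unravelling the definition~\eqref{eq:desirset:to:rejectset} of $\rejectset[\desirset]$, this gives me two concrete properties of $\desirset$ that I will use: \emph{(i)} for every $\altoptset\in\rejectset$ there is some $\opt\in\altoptset$ with $\opt-\epsilon^*(\altoptset)\in\desirset$, and \emph{(ii)} $\opt\notin\desirset$ for every $\opt\in\optset$.

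Next I would define the lower prevision associated with $\desirset$ by the standard formula
\begin{equation*}
\lowprev(\opt)\coloneqq\sup\set[\big]{\mu\in\reals}{\opt-\mu\in\desirset}\text{ for all }\opt\in\gbls.
\end{equation*}
A short verification shows that~\ref{ax:desirs:nozero}--\ref{ax:desirs:cone} translate into~\ref{ax:lowprev:inf}--\ref{ax:lowprev:superadditive}: boundedness follows because $\inf\opt>\mu$ implies $\opt-\mu\in\desirset$ by~\ref{ax:desirs:pos}; positive homogeneity follows from the cone property~\ref{ax:desirs:cone} with $\mu=0$; superadditivity follows from the same axiom with $\lambda=\mu=1$; and the supremum is finite (so $\lowprev$ is real-valued) because $\mu>\sup\opt$ would give both $\opt-\mu$ and $\mu-\opt$ in $\desirset$, producing $0\in\desirset$ through~\ref{ax:desirs:cone} and contradicting~\ref{ax:desirs:nozero}.

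Finally, I verify that $\lowprev$ does what we need. For $\rejectset\subseteq\rejectset[\lowprev]$: given $\altoptset\in\rejectset$, property~\emph{(i)} produces an $\opt\in\altoptset$ with $\opt-\epsilon^*(\altoptset)\in\desirset$, so $\lowprev(\opt)\geq\epsilon^*(\altoptset)>0$ and hence $\altoptset\in\rejectset[\lowprev]$ by~\eqref{eq:Kfromlowprev}. For $\optset\notin\rejectset[\lowprev]$, it suffices to show $\lowprev(\opt)\leq0$ for every $\opt\in\optset$; by property~\emph{(ii)} such an $\opt$ is not in $\desirset$, and if we had $\opt-\mu\in\desirset$ for some $\mu>0$ then, since $\mu\in\desirset$ by~\ref{ax:desirs:pos}, the cone property~\ref{ax:desirs:cone} would give $\opt=(\opt-\mu)+\mu\in\desirset$, a contradiction. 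The main conceptual step is the first one, where Archimedeanity is used to squeeze out a suitable $\epsilon^*$ and hence a separating $\desirset$; the remaining translation from $\desirset$ to $\lowprev$ is routine once the definition is in place.
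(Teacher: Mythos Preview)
Your proof is correct and follows essentially the same route as the paper's: use \ref{ax:infiniteoptionsets:evencontinuity} to obtain an $\epsilon^*$ with $\optset\notin\natex(\rejectset[\epsilon^*])$, extract a separating coherent $\desirset$ via Equation~\eqref{eq:natex}, define $\lowprev(\opt)=\sup\{\mu:\opt-\mu\in\desirset\}$, and then verify $\rejectset\subseteq\rejectset[\lowprev]$ and $\optset\notin\rejectset[\lowprev]$ exactly as you do. The only cosmetic difference is that the paper cites the coherence of $\lowprev$ as a standard result from~\cite{walley1991}, whereas you spell out the verification of~\ref{ax:lowprev:inf}--\ref{ax:lowprev:superadditive} directly.
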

\begin{proof}
Let $\rejectset$ be an Archimedean set of desirable option sets and consider any option set $\optset\in\optsets$ such that $\optset\notin\rejectset$. 

Since $\rejectset$ is Archimedean, we know from Definition~\ref{def:infinite:strictcoherence} that there is some $\epsilon\in\posreals^{\rejectset}$ such that $\optset\notin\natex(\rejectset[\epsilon])$. Because of Equation~\eqref{eq:natex}, this in turn implies that there must be some coherent set of desirable options $\desirset\in\cohdesirsets$ such that $\optset\notin\rejectset[\desirset]$ and $\rejectset[\epsilon]\subseteq\rejectset[\desirset]$.
For this coherent set of desirable options, we now consider the corresponding coherent lower prevision $\lowprev$, defined by
\begin{equation}\label{eq:fromDtoLP}
\lowprev(\opt)\coloneqq\sup\{\mu\in\reals\colon\opt-\mu\in\desirset\}
\text{~for all $\opt\in\gbls$.}
\end{equation}
That this functional $\lowprev$ is indeed a coherent lower prevision is a standard result and follows easily from the coherence of $\desirset$; see for example~\cite[Section 2.3.3]{walley1991}. It remains to show that $\optset\notin\rejectset[\lowprev]$ and $\rejectset\subseteq\rejectset[\lowprev]$.

First assume \emph{ex absurdo} that $\optset\in\rejectset[\lowprev]$. It then follows from Equation~\eqref{eq:Kfromlowprev} that there is an option $\opt\in\optset$ such that $\lowprev(\opt)>0$. Because of Equation~\eqref{eq:fromDtoLP}, this implies that there is some real $\mu>0$ such that $\opt-\mu\in\desirset$. Since $\mu>0$ and \ref{ax:desirs:pos} imply that $\mu\in\desirset$, it therefore follows from \ref{ax:desirs:cone} that $\opt=(\opt-\mu)+\mu\in\desirset$. Since $\opt\in\optset$, this allows us to infer from Equation~\eqref{eq:desirset:to:rejectset} that $\optset\in\rejectset[\desirset]$, a contradiction. Hence, it follows that $\optset\notin\rejectset[\lowprev]$.

Finally, to show that $\rejectset\subseteq\rejectset[\lowprev]$, we consider any $\altoptset\in\rejectset$ and prove that $\altoptset\in\rejectset[\lowprev]$.
Consider any $\opt\in\optset$.
 Since $\altoptset\in\rejectset$, it follows from Equation~\eqref{eq:Kepsilon} that $\altoptset-\epsilon(\altoptset)\in\rejectset[\epsilon]$. Since $\rejectset[\epsilon]\subseteq\rejectset[\desirset]$, this implies that $\altoptset-\epsilon(\altoptset)\in\rejectset[\desirset]$. It therefore follows from Equation~\eqref{eq:desirset:to:rejectset} that there is some $\altopt\in\altoptset-\epsilon(\altoptset)$ such that $\altopt\in\desirset$, or equivalently, some $\opt\in\altoptset$ such that $\opt-\epsilon(\altoptset)\in\desirset$. Since $\epsilon(\altoptset)>0$, it therefore follows from Equation~\eqref{eq:fromDtoLP} that $\lowprev(\opt)>0$. Since $\opt\in\altoptset$, we infer from Equation~\eqref{eq:Kfromlowprev} that $\altoptset\in\rejectset[\lowprev]$.
\qed
\end{proof}

\begin{proof*}{Proof of Theorem~\ref{theo:rejectsets:representation:lowerprev:twosidedstrict}}
For the `only if' part of the statement, we assume that $\rejectset\in\rejectsets$ is an Archimedean set of desirable option sets.

For any $\optset\in\optsets$ such that $\optset\notin\rejectset$, we know from Proposition~\ref{prop:strictlycoherentimpliesdominatedbylowprev} that there is some $\lowprev[\optset]\in\cohlowprevs(\rejectset)$ such that $\optset\notin\rejectset[{\lowprev[\optset]}]$. Since $\{0\}\notin\rejectset$ due to~\ref{ax:rejects:nozero}, this implies that $\cohlowprevs(\rejectset)$ is non-empty because it contains $\lowprev[\{0\}]$. Furthermore, since $\optset\notin\rejectset[{\lowprev[\optset]}]$ for every $\optset\notin\rejectset$, with $\lowprev[\optset]\in\cohlowprevs(\rejectset)$, we also know that $\optset\notin\bigcap\cset{\rejectset[\lowprev]}{\lowprev\in\cohlowprevs(\rejectset)}=\rejectset[{\cohlowprevs(\rejectset)}]$ for every $\optset\notin\rejectset$, and therefore, that $\rejectset[{\cohlowprevs(\rejectset)}]\subseteq\rejectset$. Since it follows from the definition of $\cohlowprevs(\rejectset)$ that also $\rejectset\subseteq\rejectset[{\cohlowprevs(\rejectset)}]$, we find that $\rejectset=\rejectset[{\cohlowprevs(\rejectset)}]$. Since $\cohlowprevs(\rejectset)\subseteq\cohlowprevs$ is non-empty, it follows that the `only if' part of the statement holds for $\mathcal{P}=\cohlowprevs(\rejectset)$. 

For the `if' part of the statement, we consider any non-empty set $\mathcal{P}\subseteq\cohlowprevs$ of coherent lower previsions such that $\rejectset=\rejectset[\mathcal{P}]$. Since $\rejectset[\mathcal{P}]=\bigcap\cset{\rejectset[\lowprev]}{\lowprev\in\mathcal{P}}$, it then follows that $\rejectset\subseteq\rejectset[\lowprev]$ for all $\lowprev\in\mathcal{P}$, implying that $\mathcal{P}$ is a subset of $\cohlowprevs(\rejectset)$. Hence, the largest such set $\mathcal{P}$ is indeed $\cohlowprevs(\rejectset)$. It remains to show that for any such set $\mathcal{P}$, $\rejectset=\rejectset[\mathcal{P}]$ is Archimedean, which is what we now set out to do.

We know from Lemma~\ref{lemma:rejectlowprevcoherent} that $\rejectset[\lowprev]$ is coherent for every $\lowprev\in\mathcal{P}$. Since $\rejectset=\rejectset[\mathcal{P}]=\bigcap\cset{\rejectset[\lowprev]}{\lowprev\in\mathcal{P}}$ and $\mathcal{P}$ is non-empty, and because coherence is clearly preserved under taking non-empty intersections, it follows that $\rejectset$ coherent. According to Definition~\ref{def:infinite:strictcoherence}, the Archimedeanity of $\rejectset$ therefore hinges on whether or not it satisfies~\ref{ax:infiniteoptionsets:evencontinuity}. We will show that it indeed satisfies this property. 
To that end, consider any $\optset\in\optsets$ such that $\optset\in\natex(\rejectset[\epsilon])$ for all $\epsilon\in\posreals^{\rejectset}$. We need to prove that $\optset\in\rejectset$.

Consider any $\lowprev\in\mathcal{P}$ and any $\altoptset\in\rejectset=\rejectset[\mathcal{P}]\subseteq\rejectset[\lowprev]$. It then follows from Equation~\eqref{eq:Kfromlowprev} that there is some $\opt\in\altoptset$ such that $\lowprev(\opt)>0$. Let $\epsilon^*(\altoptset)\coloneqq\nicefrac{\lowprev(\opt)}{2}>0$. It then follows from~\ref{ax:lowprev:constantadditivity} that $\lowprev(\opt-\epsilon^*(\altoptset))=\lowprev(\opt)-\epsilon^*(\altoptset)=\nicefrac{\lowprev(\opt)}{2}>0$, which, since $\opt\in\altoptset$, implies that $\altoptset-\epsilon^*(\altoptset)\in\rejectset[\lowprev]$. Since this is true for every $\altoptset\in\rejectset$, we have found some $\smash{\epsilon^*\in\posreals^{\rejectset}}$ such that $\rejectset[\epsilon^*]\subseteq\rejectset[\lowprev]$. Hence, since $\rejectset[\lowprev]$ is coherent because of Lemma~\ref{lemma:rejectlowprevcoherent}, it follows from Equation~\eqref{eq:natex} that $\natex(\rejectset[\epsilon^*])\subseteq\rejectset[\lowprev]$. Since by assumption, $\optset\in\natex(\rejectset[\epsilon])$ for all $\smash{\epsilon\in\posreals^{\rejectset}}$, we therefore have in particular that $\smash{\optset\in\natex(\rejectset[\epsilon^*])\subseteq\rejectset[\lowprev]}$. Since this is true for every $\lowprev\in\mathcal{P}$, we conclude that $\optset\in\bigcap\cset{\rejectset[\lowprev]}{\lowprev\in\mathcal{P}}=\rejectset[\mathcal{P}]=\rejectset$. \qed
\end{proof*}

\begin{proof*}{Proof of Proposition~\ref{prop:strictlycoherentCiffK}}
First assume that $\choicefun$ is an Archimedean choice function, meaning that $\rejectset[\choicefun]$ is Archimedean and $\choicefun$ is translation invariant. Consider the Archimedean set of desirable option sets $\rejectset=\rejectset[\choicefun]$. For any $\optset\in\optsets$ and $\opt\in\optset$, we then find that
\begin{align*}
\opt\in\choicefun[\rejectset](\optset)
\Leftrightarrow
\optset\ominus\opt\notin\rejectset
\Leftrightarrow
\optset\ominus\opt\notin\rejectset[\choicefun]
&\Leftrightarrow
0\in\choicefun\big((\optset\ominus\opt)\cup\{0\}\big)\\
&\Leftrightarrow
0\in\choicefun(\optset-\opt)
\Leftrightarrow
\opt\in\choicefun(\optset),
\end{align*}
using Equation~\eqref{eq:fromCtoK} for the first equivalence, Equation~\eqref{eq:fromKtoC} for the third one, and translation invariance---Equation~\eqref{eq:translationinvariance}---for the last. Hence, $\choicefun=\choicefun[\rejectset]$.

Next, consider any Archimedean set of desirable option sets $\rejectset$ such that $\choicefun=\choicefun[\rejectset]$. First observe that $\choicefun[\rejectset]$ is translation invariant because, for all $\optset\in\optsets$ and $\opt\in\optset$,
\begin{equation*}
\opt\in\choicefun[\rejectset](\optset)
\Leftrightarrow
\optset\ominus\opt\notin\rejectset
\Leftrightarrow
(\optset-\opt)\ominus0\notin\rejectset
\Leftrightarrow
0\in\choicefun[\rejectset](\optset-\opt),
\end{equation*}
using Equation~\eqref{eq:fromKtoC} for the first and third equivalence. Next, observe that for all $\optset\in\optsets$
\begin{align*}
\optset\in\rejectset[\choicefun]
&\Leftrightarrow
0\notin\choicefun(\optset\cup\{0\})\\
&\Leftrightarrow
0\notin\choicefun[\rejectset](\optset\cup\{0\})
\Leftrightarrow
(\optset\cup\{0\})\ominus0\in\rejectset
\Leftrightarrow
\optset\setminus\{0\}\in\rejectset
\Leftrightarrow
\optset\in\rejectset,
\end{align*}
using Equation~\eqref{eq:fromCtoK} for the first equivalence, Equation~\eqref{eq:fromKtoC} for the third equivalence and~\ref{ax:rejects:removezero} and~\ref{ax:rejects:mono} for the fifth one. This implies that $\rejectset=\rejectset[\choicefun]$, thereby implying that $\rejectset$ is unique and equal to $\rejectset[\choicefun]$. Since $\rejectset$ is Archimedean, this also implies that $\rejectset[\choicefun]$ is Archimedean and therefore, since we already know that $\choicefun$ is translation invariant, that $\choicefun$ is Archimedean.
\qed
\end{proof*}

\begin{proof*}{Proof of Theorem~\ref{theo:axiomatisationofArchimedeanity}}
First assume that $\choicefun$ is Archimedean. It then follows from Proposition~\ref{prop:strictlycoherentCiffK} that $\choicefun=\choicefun[\rejectset]$ for the Archimedean set of desirable option sets $\rejectset=\rejectset[\choicefun]$. Since $\rejectset[\choicefun]$ is Archimedean, it furthermore follows from Theorem~\ref{theo:rejectsets:representation:lowerprev:twosidedstrict} that $\rejectset[\choicefun]=\rejectset[\mathcal{P}]$, with $\mathcal{P}=\cohlowprevs(\rejectset[\choicefun])$. Hence, we find that $\choicefun=\choicefun[\rejectset]=\choicefun[{\rejectset[\choicefun]}]=\choicefun[{\rejectset[\mathcal{P}]}]$, which is equal to $\choicefun[\mathcal{P}]$ because
\begin{align*}
\choicefun[{\rejectset[\mathcal{P}]}](\optset)
&=
\{\opt\in\optset\colon\optset\ominus\opt\notin\rejectset[\mathcal{P}]\}\\
&=
\big\{\opt\in\optset\colon\hspace{-2pt}(\exists\lowprev\in\mathcal{P})\,\optset\ominus\opt\notin\rejectset[\lowprev]\big\}\\
&=
\big\{\opt\in\optset\colon\hspace{-2pt}(\exists\lowprev\in\mathcal{P})(\forall\altopt\in\optset\setminus\{\opt\})\,\lowprev(\altopt-\opt)\leq0\big\}=\choicefun[\mathcal{P}](\optset)\text{~~for all $\optset\in\optsets$.}
\end{align*}
Conversely, consider any non-empty set $\mathcal{P}\subseteq\cohlowprevs$ of coherent lower previsions such that $\choicefun=\choicefun[\mathcal{P}]$. It then follows from the derivation above that $\choicefun=\choicefun[{\rejectset[\mathcal{P}]}]$ and from Theorem~\ref{theo:rejectsets:representation:lowerprev:twosidedstrict} that $\rejectset[\mathcal{P}]$ is Archimedean. Proposition~\ref{prop:strictlycoherentCiffK} therefore implies that $\choicefun$ is Archimedean.

It remains to show that $\mathcal{P}\subseteq\cohlowprevs(\rejectset[\choicefun])$. To that end, consider any $\lowprev\in\mathcal{P}$ and any $\optset\in\rejectset[\choicefun]$. Since $\optset\in\rejectset[\choicefun]$, we know that $0\notin\choicefun(\optset\cup\{0\})$. Since $\choicefun=\choicefun[\mathcal{P}]$, it therefore follows from Equation~\eqref{eq:archimedeanchoicefunction} that there is some $\altopt\in(\optset\cup\{0\})\setminus\{0\}$ such that $0<\lowprev(\altopt-0)=\lowprev(\altopt)$. Since $(\optset\cup\{0\})\setminus\{0\}=\optset\setminus\{0\}\subseteq\optset$, this implies that there is some $\altopt\in\optset$ such that $\lowprev(\altopt)>0$, meaning that $\optset\in\rejectset[\lowprev]$. Since $\optset\in\rejectset[\choicefun]$ was arbitrary, this implies that $\rejectset[\choicefun]\subseteq\rejectset[\lowprev]$, hence $\lowprev\in\cohlowprevs(\rejectset[\choicefun])$. Since $\lowprev\in\mathcal{P}$ was arbitrary, we find that $\mathcal{P}$ is indeed a subset of $\cohlowprevs(\rejectset[\choicefun])$.
\qed
\end{proof*}

\begin{proof*}{Proof of Proposition~\ref{prop:mixing:dominatinglowerprevarelinear}}
Consider any $\lowprev\in\cohlowprevs(\rejectset)$ and assume \emph{ex absurdo} that $\lowprev\notin\linprevs$. It then follows from Proposition~\ref{prop:lineariff} that there is some $\opt\in\gbls$ such that $\lowprev(\opt)\neq\uppprev(\opt)$, or equivalently, such that $\lowprev(\opt)+\lowprev(-\opt)\neq0$. Since we know from \ref{ax:lowprev:constant} and \ref{ax:lowprev:superadditive} that
\begin{equation*}
0=\lowprev(0)=\lowprev(\opt-\opt)\geq\lowprev(\opt)+\lowprev(-\opt),
\end{equation*}
this implies that $\lowprev(\opt)+\lowprev(-\opt)<0$. Let $\epsilon\coloneqq-\lowprev(\opt)-\lowprev(-\opt)>0$ and define $\opt[1]\coloneqq\opt-\lowprev(\opt)-\nicefrac{\epsilon}{2}$ and $\opt[2]\coloneqq-\opt+\lowprev(\opt)+\epsilon$. Then on the one hand, it follows from~\ref{ax:lowprev:constantadditivity} that
\begin{equation*}
\lowprev(\opt[1])
=\lowprev(\opt-\lowprev(\opt)-\nicefrac{\epsilon}{2})
=\lowprev(\opt)-\lowprev(\opt)-\nicefrac{\epsilon}{2}
=-\nicefrac{\epsilon}{2}<0
\end{equation*}
and
\vspace{-2pt}
\begin{equation*}
\lowprev(\opt[2])
=\lowprev(-\opt+\lowprev(\opt)+\epsilon)
=\lowprev(-\opt)+\lowprev(\opt)+\epsilon
=0,\vspace{5pt}
\end{equation*}
so $\{\opt[1],\opt[2]\}\notin\rejectset[\lowprev]$ and therefore, since $\rejectset\subseteq\rejectset[\lowprev]$, also $\{\opt[1],\opt[2]\}\notin\rejectset$. On the other hand, however, since $\opt[1]+\opt[2]=\nicefrac{\epsilon}{2}>0$, it follows from the coherence of $\rejectset$---and \ref{ax:rejects:pos} and~\ref{ax:rejects:mono} in particular---that $\{\opt[1],\opt[2],\opt[1]+\opt[2]\}\in\rejectset$. Since $\{\opt[1],\opt[2]\}\subseteq\{\opt[1],\opt[2],\opt[1]+\opt[2]\}\subseteq\posi(\{\opt[1],\opt[2]\})$, this contradicts the mixingness of $\rejectset$.
\qed
\end{proof*}

\begin{proof*}{Proof of Theorem~\ref{theo:axiomatisationforEadmissibility}}
First assume that $\choicefun$ is Archimedean and mixing. This implies that the set of desirable option sets $\rejectset[\choicefun]$ is Archimedean (and hence also coherent) and mixing, and therefore, it follows from Proposition~\ref{prop:mixing:dominatinglowerprevarelinear} that $\cohlowprevs(\rejectset[\choicefun])=\linprevs(\rejectset[\choicefun])$. Since $\choicefun$ is Archimedean, it furthermore follows from Theorem~\ref{theo:axiomatisationofArchimedeanity} that $\choicefun=\choicefun[\mathcal{P}]$, with $\mathcal{P}=\cohlowprevs(\rejectset[\choicefun])=\linprevs(\rejectset[\choicefun])$ non-empty. For every $\optset\in\optsets$, this implies that
\begin{align}
\choicefun[\mathcal{P}](\optset)
&=
\big\{\opt\in\optset\colon(\exists\linprev\in\mathcal{P})\,(\forall\altopt\in\optset\setminus\{\opt\})~\linprev(\altopt-\opt)\leq0\big\}\notag\\
&=
\big\{\opt\in\optset\colon(\exists\linprev\in\mathcal{P})\,(\forall\altopt\in\optset\setminus\{\opt\})~\linprev(\opt)\geq\linprev(\altopt)\big\}
=\choicefun[\mathcal{P}]^{\mathrm{\,E}}(\optset),\label{eq:droptheE}
\end{align}
where the crucial second equality holds because the elements of $\mathcal{P}=\linprevs(\rejectset[\choicefun])$ are \emph{linear} previsions. Hence, $\choicefun=\choicefun[\mathcal{P}]^{\mathrm{\,E}}$.

Consider now any non-empty set $\mathcal{P}\subseteq\linprevs$ of linear previsions such that $\choicefun=\choicefun[\mathcal{P}]^{\mathrm{\,E}}$. It then follows from Equation~\eqref{eq:droptheE} that $\choicefun=\choicefun[\mathcal{P}]^{\mathrm{\,E}}=\choicefun[\mathcal{P}]$. Since $\mathcal{P}\subseteq\linprevs\subseteq\cohlowprevs$, it therefore follows from Theorem~\ref{theo:axiomatisationofArchimedeanity} that $\choicefun$ is Archimedean. To show that it is mixing, we consider any $\optset,\altoptset\in\optsets$ such that $\altoptset\in\rejectset[\choicefun]$ and $\optset\subseteq\altoptset\subseteq\posi(\optset)$. We need to prove that then also $\optset\in\rejectset[\choicefun]$. Since $\altoptset\in\rejectset[\choicefun]$, we know that $0\notin\choicefun(\altoptset\cup\{0\})=\choicefun[\mathcal{P}]^{\mathrm{\,E}}(\altoptset\cup\{0\})$. Consider now any $\linprev\in\mathcal{P}$. It then follows from Equation~\eqref{eq:choicefromEadmissibility} that there is some $\altopt\in(\altoptset\cup\{0\})\setminus\{0\}=\altoptset\setminus\{0\}$ such that $\linprev(\altopt)>\linprev(0)=0$, using \ref{ax:lowprev:constant} for the last equality. Since $\altopt\in\altoptset\setminus\{0\}\subseteq\altoptset\subseteq\posi(\optset)$, we know that $\altopt=\sum_{i=1}^n\lambda_i\opt[i]$, with $n\in\naturals$ and, for all $i\in\{1,\dots,n\}$, $\lambda_i>0$ and $\opt[i]\in\optset$. Since $\linprev$ is linear and $\linprev(\altopt)>0$, this implies that $\sum_{i=1}^n\lambda_i\linprev(\opt[i])>0$. Since the coefficients $\lambda_i$ are all strictly positive, this implies that there must be some $k\in\{1,\dots,n\}$ such that $\linprev(\opt[k])>0=\linprev(0)$. 
Since $\opt[k]\in\optset$, this implies that we have found some $\opt\in\optset$ such that $\linprev(\opt)>0$. Since $\linprev(0)=0$, we also know that $\opt\neq0$. 
Hence, $\opt\in\optset\setminus\{0\}=(\optset\cup\{0\})\setminus\{0\}$ and $\linprev(\opt)>0=\linprev(0)$. 
Since we can repeat this argument for every $\linprev\in\mathcal{P}$, we infer from Equation~\eqref{eq:choicefromEadmissibility} that $0\notin\choicefun[\mathcal{P}]^{\mathrm{\,E}}(\optset\cup\{0\})=\choicefun(\optset\cup\{0\})$. Hence, $\optset\in\rejectset[\choicefun]$, as desired.

It remains to show that $\mathcal{P}\subseteq\linprevs(\rejectset[\choicefun])$. On the one hand, since $\choicefun=\choicefun[\mathcal{P}]$ and $\mathcal{P}\subseteq\linprevs\subseteq\cohlowprevs$, we know from Theorem~\ref{theo:axiomatisationofArchimedeanity} that $\mathcal{P}\subseteq\cohlowprevs(\rejectset[\choicefun])$. On the other hand, since $\choicefun$ is Archimedean and mixing, we know that $\rejectset[\choicefun]$ is Archimedean (and hence coherent) and mixing, so Proposition~\ref{prop:mixing:dominatinglowerprevarelinear} tells us that $\cohlowprevs(\rejectset[\choicefun])=\linprevs(\rejectset[\choicefun])$. Hence, we find that $\mathcal{P}\subseteq\cohlowprevs(\rejectset[\choicefun])=\linprevs(\rejectset[\choicefun])$.
\qed
\end{proof*}

\begin{proposition}\label{prop:seperatesingletonswithP}
Let $\rejectset$ be an Archimedean set of desirable option sets and consider any $\opt\in\gbls$. Then $\{\opt\}\notin\rejectset$ if and only if there is some $\linprev\in\linprevs(\rejectset)$ such that $\{\opt\}\notin\rejectset[\linprev]$.
\end{proposition}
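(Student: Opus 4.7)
The plan is to treat the two directions separately, with the reverse direction being almost immediate and the forward direction relying on Proposition~\ref{prop:strictlycoherentimpliesdominatedbylowprev} together with Theorem~\ref{theo:lowerenvelop}.

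For the \emph{if} direction, suppose there is some $\linprev\in\linprevs(\rejectset)$ with $\{\opt\}\notin\rejectset[\linprev]$. By the very definition of $\linprevs(\rejectset)$, we have $\rejectset\subseteq\rejectset[\linprev]$, so $\{\opt\}\in\rejectset$ would force $\{\opt\}\in\rejectset[\linprev]$, a contradiction. Hence $\{\opt\}\notin\rejectset$.

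For the \emph{only if} direction, assume that $\{\opt\}\notin\rejectset$. Since $\rejectset$ is Archimedean, Proposition~\ref{prop:strictlycoherentimpliesdominatedbylowprev} yields a coherent lower prevision $\lowprev\in\cohlowprevs(\rejectset)$ with $\{\opt\}\notin\rejectset[\lowprev]$, which, by Equation~\eqref{eq:Kfromlowprev}, means $\lowprev(\opt)\leq0$. Now invoke Theorem~\ref{theo:lowerenvelop} to write $\lowprev$ as the pointwise minimum of a non-empty set $\mathcal{P}\subseteq\linprevs$ of linear previsions. Since the minimum is attained, there exists $\linprev\in\mathcal{P}$ with $\linprev(\opt)=\lowprev(\opt)\leq0$, and this $\linprev$ satisfies $\linprev(\altopt)\geq\lowprev(\altopt)$ for every $\altopt\in\gbls$.

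It remains to check that this specific $\linprev$ lies in $\linprevs(\rejectset)$, i.e.\ that $\rejectset\subseteq\rejectset[\linprev]$. Pick any $\altoptset\in\rejectset$; since $\rejectset\subseteq\rejectset[\lowprev]$, there is some $\altopt\in\altoptset$ with $\lowprev(\altopt)>0$, and by pointwise domination also $\linprev(\altopt)\geq\lowprev(\altopt)>0$, so $\altoptset\in\rejectset[\linprev]$. Combined with $\linprev(\opt)\leq0$, i.e.\ $\{\opt\}\notin\rejectset[\linprev]$, this yields the required $\linprev\in\linprevs(\rejectset)$.

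The only non-trivial ingredient is the selection of a dominating linear prevision that attains $\lowprev(\opt)$; this is precisely what Theorem~\ref{theo:lowerenvelop} provides, and the rest is a book-keeping exercise matching the definitions of $\rejectset[\lowprev]$, $\rejectset[\linprev]$, and $\linprevs(\rejectset)$.
\qed
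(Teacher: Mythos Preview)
Your proof is correct and follows essentially the same route as the paper: the \emph{only if} direction invokes Proposition~\ref{prop:strictlycoherentimpliesdominatedbylowprev} to obtain a dominating $\lowprev\in\cohlowprevs(\rejectset)$ with $\lowprev(\opt)\leq0$, then Theorem~\ref{theo:lowerenvelop} to pass to a linear $\linprev$ attaining this value and dominating $\lowprev$ pointwise, from which $\linprev\in\linprevs(\rejectset)$ follows exactly as you argue. Your \emph{if} direction is in fact slightly cleaner than the paper's, which appeals to the representation of Theorem~\ref{theo:rejectsets:representation:lowerprev:twosidedstrict}; you instead argue directly from the defining inclusion $\rejectset\subseteq\rejectset[\linprev]$, which is all that is needed.
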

\begin{proof}
First assume that $\{\opt\}\notin\rejectset$. Since $\rejectset$ is Archimedean, it then follows from Proposition~\ref{prop:strictlycoherentimpliesdominatedbylowprev} that there is a coherent lower prevision $\lowprev\in\cohlowprevs(\rejectset)$ such that $\{\opt\}\notin\rejectset[\lowprev]$, meaning that $\lowprev(\opt)\leq0$. Because of Theorem~\ref{theo:lowerenvelop}, this implies that there is some $\linprev\in\linprevs$ such that $\linprev(\opt)=\lowprev(\opt)\leq0$ and, for all $\altopt\in\gbls$, $\linprev(\altopt)\geq\lowprev(\altopt)$. Since $\linprev(\opt)\leq0$, we see that $\{\opt\}\notin\rejectset[\linprev]$. To establish the `only if' part of the statement, it therefore remains to prove that $\linprev\in\linprevs(\rejectset)$, or equivalently, that $\rejectset\subseteq\rejectset[\linprev]$. To that end, consider any $\optset\in\rejectset$. 
Since $\lowprev\in\cohlowprevs(\rejectset)$, we know that $\rejectset\subseteq\rejectset[\lowprev]$ and therefore, that $\optset\in\rejectset[\lowprev]$, meaning that there is some $\altopt\in\optset$ such that $\lowprev(\altopt)>0$. Since $\linprev(\altopt)\geq\lowprev(\altopt)$, this implies that $\linprev(\altopt)>0$. 
Hence, since $\altopt\in\optset$, we find that $\optset\in\rejectset[\linprev]$. Since $\optset\in\rejectset$ was arbitrary, it follows that $\rejectset\subseteq\rejectset[\linprev]$.

For the `if' part of the statement, assume that there is some $\linprev\in\linprevs(\rejectset)$ such that $\{\opt\}\notin\rejectset[\linprev]$. Since $\rejectset$ is Archimedean, it follows from Theorem~\ref{theo:rejectsets:representation:lowerprev:twosidedstrict} that $\rejectset=\rejectset[\mathcal{P}]=\bigcap\cset{\rejectset[\lowprev]}{\lowprev\in\mathcal{P}}$, with $\mathcal{P}=\cohlowprevs(\rejectset)$. Since $\linprev\in\linprevs(\rejectset)\subseteq\cohlowprevs(\rejectset)=\mathcal{P}$ and $\{\opt\}\notin\rejectset[\linprev]$, this implies that $\{\opt\}\notin\rejectset$.
\qed
\end{proof}

\begin{proof*}{Proof of Theorem~\ref{theo:axiomatisationforMaximality}}
First assume that $\choicefun$ is Archimedean and binary.
Since we know that $\choicefun$ is Archimedean, it follows from Proposition~\ref{prop:strictlycoherentCiffK} that $\choicefun=\choicefun[\rejectset]$ for the Archimedean (and hence coherent) set of desirable option sets $\rejectset=\rejectset[\choicefun]$. For any $\opt,\altopt\in\gbls$ such that $\opt\neq\altopt$, this implies that
\begin{align*}
\opt\in\choicefun(\{\opt,\altopt\})
\Leftrightarrow
\opt\in\choicefun[\rejectset](\{\opt,\altopt\})
&\Leftrightarrow
\{\altopt-\opt\}=\{\opt,\altopt\}\ominus\{\opt\}\notin\rejectset=\rejectset[\choicefun]\\
&\Leftrightarrow
(\exists\linprev\in\linprevs(\rejectset[\choicefun]))
\,
\{\altopt-\opt\}\notin\rejectset[\linprev]\\
&\Leftrightarrow
(\exists\linprev\in\linprevs(\rejectset[\choicefun]))
\,
\linprev(\altopt-\opt)\leq0\\
&\Leftrightarrow
(\exists\linprev\in\linprevs(\rejectset[\choicefun]))
\,
\linprev(\opt)\geq\linprev(\altopt),
\end{align*}
using Proposition~\ref{prop:seperatesingletonswithP} for the third equivalence. Since $\choicefun$ is binary, this implies that
\begin{align*}
\choicefun(\optset)
&=\big\{\opt\in\optset\colon(\forall\altopt\in\optset\setminus\{\opt\})~\opt\in\choicefun(\{\opt,\altopt\})\big\}\\
&=
\big\{\opt\in\optset\colon(\forall\altopt\in\optset\setminus\{\opt\})(\exists\linprev\in\linprevs(\rejectset[\choicefun]))\,\linprev(\opt)\geq\linprev(\altopt)\big\}
=\choicefun[{\linprevs(\rejectset[\choicefun])}]^{\mathrm{\,M}}(\optset)
\end{align*}
for all $\optset\in\optsets$. Hence, $\choicefun=\choicefun[\mathcal{P}]^{\mathrm{\,M}}$, with $\mathcal{P}=\linprevs(\rejectset[\choicefun])$. To see why $\linprevs(\rejectset[\choicefun])$ is non-empty, it suffices to note that $\{0\}\notin\rejectset[\choicefun]$ because of \ref{ax:rejects:nozero} and the fact that $\rejectset[\choicefun]$ is coherent. Indeed, it then follows from Proposition~\ref{prop:seperatesingletonswithP} that there is some $\linprev\in\linprevs(\rejectset[\choicefun])$ such that $\{0\}\notin\rejectset[\linprev]$, which implies that $\linprevs(\rejectset[\choicefun])$ is non-empty.

Consider now any non-empty set $\mathcal{P}\subseteq\linprevs$ of linear previsions such that $\choicefun=\choicefun[\mathcal{P}]^{\mathrm{\,M}}$. For all $\optset\in\optsets$ and $\opt\in\optset$, it then follows from Equation~\eqref{eq:choicefromMaximality} that
\begin{align*}
\opt\in\choicefun(\optset)
\Leftrightarrow
\opt\in\choicefun[\mathcal{P}]^{\mathrm{\,M}}(\optset)
&\Leftrightarrow(\forall\altopt\in\optset\setminus\{\opt\})(\exists\linprev\in\mathcal{P})\,\linprev(\opt)\geq\linprev(\altopt)\\
&\Leftrightarrow(\forall\altopt\in\optset\setminus\{\opt\})~\opt\in\choicefun[\mathcal{P}]^{\mathrm{\,M}}(\{\opt,\altopt\})\\
&\Leftrightarrow(\forall\altopt\in\optset\setminus\{\opt\})~\opt\in\choicefun(\{\opt,\altopt\}).
\end{align*}
Hence, $\choicefun$ is binary. To show that it is also Archimedean, we consider the set 
\begin{equation*}
\mathcal{P}^*\coloneqq\big\{\min_{i=1}^n\linprev[i]\colon n\in\naturals,(\forall i\in\{1,\dots,n\})\,\linprev[i]\in\mathcal{P}\big\}
\end{equation*}
of all minima of a finite number of linear previsions in $\mathcal{P}$, which is clearly non-empty because $\mathcal{P}$ is. Due to Theorem~\ref{theo:lowerenvelop}, we also know that $\mathcal{P}^*\subseteq\cohlowprevs$. We will show that $\choicefun[\mathcal{P}^*]=\choicefun[\mathcal{P}]^{\mathrm{\,M}}$. This implies that $\choicefun=\choicefun[\mathcal{P}]^{\mathrm{\,M}}=\choicefun[\mathcal{P}^*]$ and therefore, because of Theorem~\ref{theo:axiomatisationofArchimedeanity}, that $\choicefun$ is Archimedean.

Consider any $\optset\in\optsets$ and any $\opt\in\optset$. First assume that $\opt\in\choicefun[\mathcal{P}^*](\optset)$. This means that there is some $\lowprev\in\mathcal{P}^*$ such that, for all $\altopt\in\optset\setminus\{\opt\}$, $\lowprev(\altopt-\opt)\leq0$. For all $\altopt\in\optset\setminus\{\opt\}$, since $\lowprev\in\mathcal{P}^*$, this implies that there is some $\linprev\in\mathcal{P}$ such that $\linprev(\altopt-\opt)=\lowprev(\altopt-\opt)\leq0$ and therefore also $\linprev(\opt)\geq\linprev(\altopt)$. Hence, $\opt\in\choicefun[\mathcal{P}]^{\mathrm{\,M}}(\optset)$. Conversely, assume that $\opt\in\choicefun[\mathcal{P}]^{\mathrm{\,M}}(\optset)$. For all $\altopt\in\optset\setminus\{\opt\}$, this implies that there is some $\linprev[\altopt]\in\mathcal{P}$ such that $\linprev[\altopt](\opt)\geq\linprev[\altopt](\altopt)$ and therefore $\linprev[\altopt](\altopt-\opt)\leq0$. Let $\lowprev=\min_{\altopt\in\optset\setminus\{\opt\}}\linprev[\altopt]$. Then $\lowprev\in\mathcal{P}^*$ because $\optset\setminus\{\opt\}$ is finite and, for all $\altopt\in\optset\setminus\{\opt\}$, $\lowprev(\altopt-\opt)\leq\linprev[\altopt](\altopt-\opt)\leq0$. Hence, $\opt\in\choicefun[\mathcal{P}^*](\optset)$. We conclude that $\opt\in\choicefun[\mathcal{P}^*](\optset)$ if and only if $\opt\in\choicefun[\mathcal{P}]^{\mathrm{\,M}}(\optset)$. Since $\optset\in\optsets$ and $\opt\in\optset$ are arbitrary, this implies that, indeed, $\choicefun[\mathcal{P}^*]=\choicefun[\mathcal{P}]^{\mathrm{\,M}}$.

It remains to show that $\mathcal{P}\subseteq\linprevs(\rejectset[\choicefun])$. To that end, consider any $\linprev\in\mathcal{P}$ and any $\optset\in\rejectset[\choicefun]$. Since $\optset\in\rejectset[\choicefun]$, we know that $0\notin\choicefun(\optset\cup\{0\})$. Since $\choicefun=\choicefun[\mathcal{P}]^{\mathrm{\,M}}$, it therefore follows from Equation~\eqref{eq:choicefromMaximality} that there is some $\altopt\in(\optset\cup\{0\})\setminus\{0\}$ such that $0=\linprev(0)<\linprev(\altopt)$. Since $(\optset\cup\{0\})\setminus\{0\}=\optset\setminus\{0\}\subseteq\optset$, this implies that there is some $\altopt\in\optset$ such that $\linprev(\altopt)>0$, meaning that $\optset\in\rejectset[\linprev]$. Since $\optset\in\rejectset[\choicefun]$ was arbitrary, this implies that $\rejectset[\choicefun]\subseteq\rejectset[\linprev]$, hence $\linprev\in\linprevs(\rejectset[\choicefun])$. Since $\linprev\in\mathcal{P}$ was arbitrary, we find that $\mathcal{P}$ is indeed a subset of $\linprevs(\rejectset[\choicefun])$.
\qed
\end{proof*}

\begin{proof*}{Proof of Proposition~\ref{prop:EisMiffsingle}}
First assume that $\choicefun[\mathcal{P}]^{\,E}=\choicefun[\mathcal{P}]^{\,M}$. Consider any two $\linprev[1],\linprev[2]\in\mathcal{P}$ and assume \emph{ex absurdo} that $\linprev[1]\neq\linprev[2]$. This implies that there is some $\opt\in\gbls$ such that $\linprev[1](\opt)\neq\linprev[2](\opt)$. Without loss of generality, we can assume that $\linprev[1](\opt)>\linprev[2](\opt)$ (otherwise, it suffices to reverse the role of $\linprev[1]$ and $\linprev[2]$). Let $\opt[1]\coloneqq\linprev[1](\opt)-\opt$ and $\opt[2]\coloneqq\opt-\linprev[2](\opt)$ and consider the set $\optset\coloneqq\{0,\opt[1],\opt[2]\}$. It then follows from \ref{ax:lowprev:constantadditivity} that $\linprev[1](\opt[2])=\linprev[1](\opt)-\linprev[2](\opt)>0$, $\linprev[2](\opt[1])=\linprev[1](\opt)-\linprev[2](\opt)>0$ and from \ref{ax:lowprev:constant} that $\linprev[1](0)=\linprev[2](0)=0$. Hence, $\opt[1]\neq0$ and $\opt[2]\neq0$, so $\optset\setminus\{0\}=\{\opt[1],\opt[2]\}$. \ref{ax:lowprev:constantadditivity} and~\ref{ax:lowprev:constant} also imply that $\linprev[1](\opt[1])=0=\linprev[1](0)$ and $\linprev[2](\opt[2])=0=\linprev[2](0)$. Since $\linprev[1],\linprev[2]\in\mathcal{P}$, it therefore follows from Equation~\eqref{eq:choicefromMaximality} that $0\in\choicefun[\mathcal{P}]^{\mathrm{\,M}}(\optset)$. Since $\choicefun[\mathcal{P}]^{\,E}=\choicefun[\mathcal{P}]^{\,M}$, this implies that $0\in\choicefun[\mathcal{P}]^{\mathrm{\,E}}(\optset)$, and therefore, because of Equation~\eqref{eq:choicefromEadmissibility}, that there is some $\linprev\in\mathcal{P}$ such that, for all $\altopt\in\optset\setminus\{0\}=\{\opt[1],\opt[2]\}$, $\linprev(0)\geq\linprev(\altopt)$. Due to the linearity of $\linprev$ and \ref{ax:lowprev:constant}, this implies that
\begin{equation*}
\linprev(\opt[1]+\opt[2])=\linprev(\opt[1])+\linprev(\opt[2])\leq\linprev(0)+\linprev(0)=0+0=0.
\end{equation*}
However, since $\opt[1]+\opt[2]=\linprev[1](\opt)-\opt+\opt-\linprev[2](\opt)=\linprev[1](\opt)-\linprev[2](\opt)$, \ref{ax:lowprev:constant} also implies that
\begin{equation*}
\linprev(\opt[1]+\opt[2])
=\linprev\big(\linprev[1](\opt)-\linprev[2](\opt)\big)=\linprev[1](\opt)-\linprev[2](\opt)>0,
\end{equation*}
a contradiction. Hence, it must be that $\linprev[1]=\linprev[2]$. Since this is true for any two $\linprev[1],\linprev[2]\in\mathcal{P}$, and since $\mathcal{P}$ is non-empty, this implies that $\mathcal{P}$ consists of a single linear prevision $\linprev\in\linprevs$: $\mathcal{P}=\{\linprev\}$.

Next, assume that there is a linear prevision $\linprev\in\linprevs$ such that $\mathcal{P}=\{\linprev\}$. It then follows directly from Equation~\eqref{eq:choicefromEadmissibility} and~\eqref{eq:choicefromMaximality} that $\choicefun[\mathcal{P}]^{\mathrm{\,E}}=\choicefun[\mathcal{P}]^{\mathrm{\,M}}$. 
\end{proof*}

\begin{proof*}{Proof of Theorem~\ref{theo:axiomatisationforEM}}
First assume that $\choicefun$ is Archimedean, binary and mixing. Let $\mathcal{P}=\linprevs(\rejectset[\choicefun])$. Since $\choicefun$ is Archimedean and binary, it follows from Theorem~\ref{theo:axiomatisationforMaximality} that $\choicefun=\choicefun[\mathcal{P}]^{\mathrm{\,M}}$ and that $\mathcal{P}$ is non-empty. Since $\choicefun$ is Archimedean and mixing, it follows from Theorem~\ref{theo:axiomatisationforEadmissibility} that $\choicefun=\choicefun[\mathcal{P}]^{\mathrm{\,E}}$. Hence, $\choicefun[\mathcal{P}]^{\mathrm{\,E}}=\choicefun[\mathcal{P}]^{\mathrm{\,M}}$. Since $\mathcal{P}$ is non-empty, it therefore follows from Proposition~\ref{prop:EisMiffsingle} that there is a unique linear prevision $\linprev\in\linprevs$ such that $\mathcal{P}=\{P\}$. Equations~\eqref{eq:choicefromExpectationmax} and~\eqref{eq:choicefromEadmissibility} now imply that $\choicefun=\choicefun[\mathcal{P}]^{\mathrm{\,E}}=\choicefun[\linprev]$.

Next, consider a linear prevision $\linprev\in\linprevs$ and let $\choicefun=\choicefun[\linprev]$. Consider the set $\mathcal{P}=\{\linprev\}$. On the one hand, it then follows from Equations~\eqref{eq:choicefromExpectationmax} and~\eqref{eq:choicefromEadmissibility} that $\choicefun=\choicefun[\linprev]=\choicefun[\mathcal{P}]^{\mathrm{\,E}}$ and therefore, because of Theorem~\ref{theo:axiomatisationforEadmissibility}, that $\choicefun$ is Archimedean and mixing. On the other hand, it follows from Equations~\eqref{eq:choicefromExpectationmax} and~\eqref{eq:choicefromMaximality} that $\choicefun=\choicefun[\linprev]=\choicefun[\mathcal{P}]^{\mathrm{\,M}}$ and therefore, because of Theorem~\ref{theo:axiomatisationforMaximality}, that $\choicefun$ is binary.
\end{proof*}
}{
}

\end{document}